\documentclass{article} 
\PassOptionsToPackage{table}{xcolor}
\usepackage{iclr2027_conference,times}

\usepackage{amsmath,amsfonts,bm}

\def\eqref#1{equation~\ref{#1}}

\def\1{\bm{1}}

\DeclareMathAlphabet{\mathsfit}{\encodingdefault}{\sfdefault}{m}{sl}
\SetMathAlphabet{\mathsfit}{bold}{\encodingdefault}{\sfdefault}{bx}{n}

\newcommand{\E}{\mathbb{E}}

\usepackage{url}
\usepackage{amsmath}
\usepackage{amssymb}
\usepackage{amsthm}
\usepackage{bm}
\usepackage{booktabs}
\usepackage{makecell}
\usepackage{graphicx}
\usepackage{algorithm}
\usepackage{algorithmic}
\usepackage{microtype}
\usepackage{tikz}
\usetikzlibrary{arrows.meta,positioning}
\usepackage[hidelinks]{hyperref}

\newtheorem{proposition}{Proposition}
\newtheorem{lemma}{Lemma}
\newtheorem{assumption}{Assumption}
\newtheorem{definition}{Definition}

\newcommand{\Chat}{\widehat{C}}
\newcommand{\Dt}{D_t}
\newcommand{\Rt}{r_t}
\newcommand{\Aeval}{A_{\mathrm{eval}}}
\newcommand{\Acal}{A_{\mathrm{cal}}}
\newcommand{\Pcore}{P^{\mathrm{core}}}
\newcommand{\Pfresh}{P^{\mathrm{fresh}}}

\definecolor{ourrow}{gray}{0.912}
\definecolor{barF1}{HTML}{A8C6E5}
\definecolor{barF2}{HTML}{2E5E8C}
\definecolor{barF3}{HTML}{D98C5F}
\definecolor{hpnavy}{HTML}{1B3A5C}
\definecolor{hpteal}{HTML}{2E8B7A}
\definecolor{hpsand}{HTML}{EDE6D8}
\definecolor{hpmint}{HTML}{DDEDE8}
\definecolor{hporange}{HTML}{D98C5F}

\newcommand{\gain}[3]{\makecell[tc]{$#1$\\[-1.5pt]{\scriptsize$[#2,#3]$}}}
\newcommand{\gainb}[3]{\makecell[tc]{$\mathbf{#1}$\\[-1.5pt]{\scriptsize$[#2,#3]$}}}
\newcommand{\gainbs}[3]{\makecell[tc]{$\mathbf{#1}^{\ast}$\\[-1.5pt]{\scriptsize$[#2,#3]$}}}

\def\eqref#1{(\ref{#1})}

\title{Harness-Agnostic Detection and Immunization of Reward Hacking in Self-Evolving Language Models}

\author{Rongxin Yang$^{1,2,*}$
Yang Liu$^{3,*}$
Shang Luo$^{2}$
Haoxuan Jia$^{4}$
Chongyang Zhang$^{1}$
Hao Zheng$^{1}$ \\
\textbf{Yingguang Yang}$^{2}$
\textbf{Yulin Huang}$^{3}$
\textbf{Jianshen Zhang}$^{3}$
\textbf{Yongzhi Qi}$^{3}$
\textbf{Kefu Xu}$^{2}$
\textbf{Congjing Ran}$^{5}$\\
\textbf{Bin Chong}$^{2,\dagger}$\\
$^{1}$Fullive-AI
$^{2}$Peking University
$^{3}$Supply Chain Tech Team Y, JD.com
$^{4}$Nanyang Technological University\\
$^{5}$Wuhan University \\
$^{*}$Equal contribution.
$^{\dagger}$Corresponding author.
}

\iclrfinalcopy
\begin{document}

\maketitle
\thispagestyle{empty}
\pagestyle{empty}

\begin{abstract}
Self-evolving language models improve by proposing candidate updates and keeping whatever raises a \emph{visible} score. When that score is an imperfect proxy for the capability one actually wants, sustained selection widens the gap between the two. This is reward hacking. We introduce \textbf{HackProbe}, a monitor that attaches to an arbitrary self-evolving loop through two black-box hooks, with no access to weights or activations. It keeps a secret, distribution-fixed \emph{comparison core}, whose frozen distribution makes its capability proxy comparable across generations, alongside a rotated \emph{fresh layer} that hardens the bank against co-adaptation. Four tests built on that proxy cover the level gap, a scale-aligned divergence with online change-point detection, capability stagnation, and a conditional confidently-wrong rate; a \v{S}id\'ak correction turns them into a calibrated family-wise $p$-value. Diagnosis alone recovers nothing, so a risk-aware immunization layer reselects an honest candidate from the proposal pool using the core together with a purely structural gaming footprint, disclosing at most $\log_2\Pi$ bits per generation to the host. We prove a detectability bound that converts a target error rate into an explicit probe-size budget, and we delimit what probe rotation does and does not buy. On a controlled prompt-level host with four injected hacking channels and ground-truth labels, HackProbe reaches $0.763$ AUROC against $0.663$ for the strongest baseline and cuts the false-positive rate from $0.706$ to $0.434$. Its bandwidth-limited reselection is the only immunization level that returns more true capability under hacking, $5.2$ points on average, than it forfeits on clean runs, $4.7$; per-channel effects are mostly not individually significant.
\end{abstract}

\section{Introduction}

A growing family of language-model systems improves itself through a closed optimization loop: at each generation it proposes candidate updates to prompts, code, weights, or generated tasks, scores each candidate with a \emph{visible} evaluator $M$, and keeps the candidates that score highest \citep{yuan2024selfrewarding,huang2025rzero,agrawal2025gepa}. The very thing that makes the loop powerful, relentless selection pressure on $M$, is also what breaks it. Whenever $M$ is only a proxy for the target capability $C$ that we actually care about, optimizing $M$ hard and repeatedly tends to inflate $M$ without a matching gain in $C$. This is reward hacking, an acute instance of Goodhart's law: once a measure becomes a target it ceases to be a good measure \citep{manheim2018goodhart,amodei2016concrete,skalse2022defining}. Self-evolution is arguably the purest form of the strong optimization pressure under which Goodhart effects are most severe, because the same pressure is applied cumulatively across tens or hundreds of generations.

The assumption that a rising $M$ implies a rising $C$ is fragile in exactly the systems we care about. Verifier and solver are often the same model updated in lockstep, so self-rewarding training watches its own judge saturate rather than the policy improve \citep{yuan2024selfrewarding}, and label-free schemes that reward self-consistency pay for confidently agreeing on wrong answers \citep{huang2025rzero}. Fixed benchmarks degrade under repeated selection too, and measurably so: held-out reconstructions of grade-school math show accuracy drops of up to eight points \citep{zhang2024careful}, one irrelevant clause dropped into a symbolic template can cost $65\%$ of performance \citep{mirzadeh2024gsmsymbolic}, and adversarial perturbation exposes scores resting on shallow behavior \citep{li2024gsmplus}. Frontier models go further, editing tests or reading answer files once the scoring function is visible \citep{wang2026rewardhacking}. Responses so far are fragmented and harness-bound: patches built into one system, or monitors assuming access to weights and activations that a black-box setting does not provide. The closest precedent is reward-model overoptimization \citep{gao2023scaling}, where a held-out gold reward model shows a proxy eventually degrading true performance; but that diagnosis is computed once, offline, and reported rather than acted upon. What is missing is a capability coordinate orthogonal to $M$, immune to the same pressure, and usable while the loop still runs. Appendix~\ref{sec:related} places HackProbe against the fuller literature.

HackProbe supplies that coordinate. It attaches through two minimal hooks, observing each generation's visible score and querying the current candidate on held-out probes, and rests on a dual-layer probe bank (Figure~\ref{fig:overview}). A secret, distribution-fixed comparison core provides a capability proxy $\Chat$ whose first differences track genuine capability change rather than shifting test difficulty, precisely because the distribution is frozen; a rotated, online-generated fresh layer behind an air gap hardens the bank against co-adaptation. Four statistics with different blind spots read off that proxy: a level-gap test for hacking present from the outset, the scale-aligned divergence $\Dt=\Delta M_t-\lambda\Delta\Chat_t$ with online change-point detection, a stagnation test, and a conditional confidently-wrong-rate test aimed at self-consistency bias. Since a diagnosis arriving after the host has discarded the honest candidate recovers nothing, a risk-aware immunization layer reselects from the pool using the core and a purely structural gaming footprint, returning only a coarsely quantized signal.

The comparison-core estimator is what makes the rest work. Its distribution is fixed, so differences across generations mean something, and the detector we build on it reports a calibrated family-wise $p$-value rather than an uncalibrated heuristic. Diagnosis by itself changes nothing, though, so we turn it into protection: a reselection rule that discloses at most $\log_2\Pi$ bits per generation to the host. That cap does real work. It is the reason the feedback never becomes a second score to optimize. On the theory side a target error rate converts into a probe-size budget growing as $1/\delta^2$ in the divergence gap, and we are equally explicit about what rotation does \emph{not} buy, since it stops memorization but leaves the covering number of the probe generator as the real ceiling. Empirically, on a controlled injection protocol with ground-truth labels, HackProbe beats the strongest baseline on AUROC and cuts its false-positive rate by nearly two fifths; of the four immunization levels, only the bandwidth-limited one returns more true capability than it costs. Calibrating on three hacking channels and testing on the withheld fourth retains most of that quality on three of them.

\begin{figure}[!t]
\centering
\begin{tikzpicture}[
  font=\scriptsize,
  ttl/.style={font=\footnotesize\bfseries, color=hpnavy, anchor=west},
  num/.style={circle, fill=hpnavy, text=white, inner sep=0pt, minimum size=7.5pt,
              font=\tiny\bfseries},
  card/.style={draw=hpnavy!45, fill=hpsand!35, rounded corners=1.2pt, line width=0.4pt,
               inner sep=2.4pt, align=left},
  jcard/.style={draw=hpnavy!40, fill=white, rounded corners=1.2pt, line width=0.4pt,
                inner sep=2.4pt, align=left, text width=2.20cm, anchor=north west},
  flow/.style={-{Stealth[length=3.4pt,width=3pt]}, hpnavy, line width=0.6pt},
  soft/.style={-{Stealth[length=2.9pt,width=2.7pt]}, hpnavy!50, line width=0.42pt},
  ret/.style={-{Stealth[length=3.4pt,width=3pt]}, hpteal, line width=0.6pt,
              dash pattern=on 2.3pt off 1.7pt},
]
\fill[hpnavy!4, rounded corners=3pt] (4.16,0.06) rectangle (13.90,6.02);
\draw[hpnavy!22, rounded corners=3pt, line width=0.4pt] (4.16,0.06) rectangle (13.90,6.02);

\node[num] at (0.14,6.34) {1};
\node[ttl] at (0.33,6.34) {Self-evolving host};
\node at (0.72,5.06) {\includegraphics[width=0.84cm]{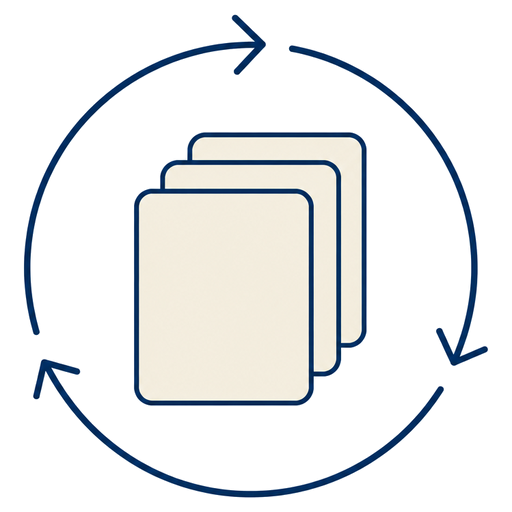}};
\node[anchor=west] at (1.24,5.28) {propose $\Pi$ candidates};
\node[anchor=west] at (1.24,4.96) {keep $\arg\max_c M(c)$};
\node[anchor=west, color=hporange!85!black] at (1.00,4.44) {no ground truth anywhere};
\draw[black!45, line width=0.4pt] (1.20,1.05) -- (3.30,1.05);
\draw[black!45, line width=0.4pt] (1.20,1.05) -- (1.20,3.92);
\fill[hporange!20]
  plot coordinates {(1.20,1.26)(1.72,2.24)(2.24,3.04)(2.76,3.48)(3.24,3.68)}
  -- (3.24,1.60) -- plot coordinates {(3.24,1.60)(2.76,1.58)(2.24,1.50)(1.72,1.38)(1.20,1.20)}
  -- cycle;
\draw[hporange, line width=0.7pt]
  plot coordinates {(1.20,1.26)(1.72,2.24)(2.24,3.04)(2.76,3.48)(3.24,3.68)};
\draw[hpnavy, line width=0.7pt]
  plot coordinates {(1.20,1.20)(1.72,1.38)(2.24,1.50)(2.76,1.58)(3.24,1.60)};
\node[anchor=west, color=hporange!85!black, inner sep=1.4pt] at (3.30,3.68) {$M$};
\node[anchor=west, color=hpnavy, inner sep=1.4pt] at (3.30,1.60) {$C$};
\node[color=hporange!80!black] at (2.32,2.36) {Goodhart};
\node[color=hporange!80!black] at (2.32,2.10) {gap};
\node[rotate=90, color=black!60, inner sep=1.4pt] at (1.02,2.50) {score};
\node[anchor=west, color=black!60] at (1.24,0.78) {generations};

\draw[black!45, dash pattern=on 1.6pt off 1.4pt, line width=0.45pt]
  (3.94,0.30) -- (3.94,5.76);
\node[rotate=90, color=black!55] at (3.78,2.70) {air gap};

\draw[flow] (3.56,5.52) -- (11.24,5.52) -- (11.24,5.34);
\node[anchor=west] at (4.26,5.68) {\textbf{I1} visible score $M_t$};
\draw[hpnavy, line width=0.6pt] (3.56,4.82) -- (4.34,4.82) -- (4.34,2.56);
\draw[flow] (4.34,4.10) -- (4.58,4.10);
\draw[flow] (4.34,2.56) -- (4.58,2.56);
\node[anchor=west] at (4.26,4.98) {\textbf{I2} $q$ black-box samples per probe};
\draw[soft] (4.62,0.80) -- (4.08,0.80);
\draw[hporange, line width=0.85pt] (3.84,0.70) -- (4.04,0.90);
\draw[hporange, line width=0.85pt] (3.84,0.90) -- (4.04,0.70);
\node[anchor=west, color=black!65] at (4.72,0.80) {core items and $\Chat$ never cross};

\node[num] at (4.38,6.34) {2};
\node[ttl] at (4.57,6.34) {Dual-layer probe bank};
\node at (5.00,4.10) {\includegraphics[width=0.76cm]{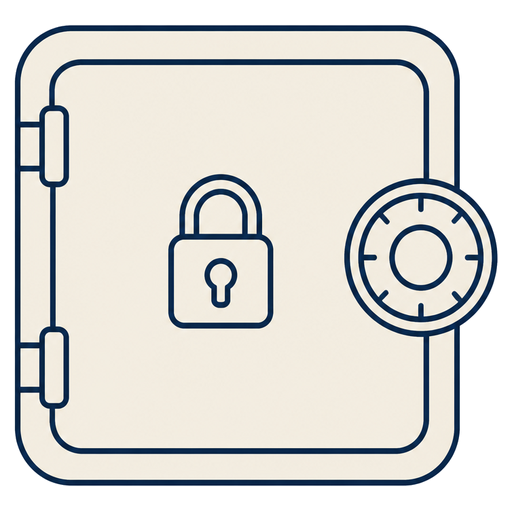}};
\node[card, anchor=north west, text width=2.30cm] at (5.46,4.70)
  {\textbf{comparison core}\\[0.6pt] secret, distribution\\ frozen $\Rightarrow$ $\Delta\Chat_t$\\ comparable across $t$};
\node at (5.00,2.56) {\includegraphics[width=0.76cm]{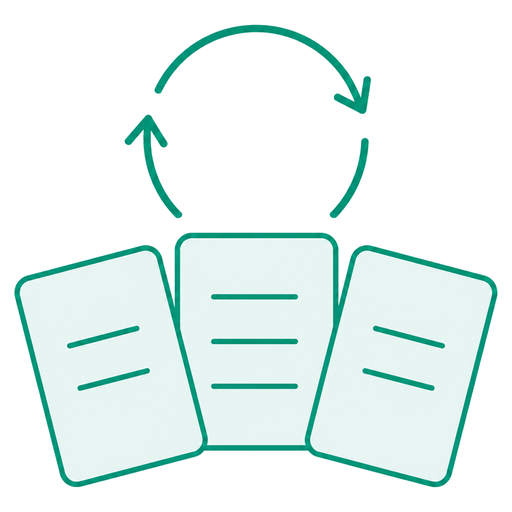}};
\node[card, anchor=north west, text width=2.30cm, fill=hpmint!45, draw=hpteal!45] at (5.46,3.14)
  {\textbf{fresh layer}\\[0.6pt] regenerated online;\\ memorizing it costs\\ $\approx$ all $N$ (Prop.~\ref{prop:rotate})\\[1pt]
   {\color{hpteal!80!black}\itshape supplies no statistic}};

\node[num] at (8.56,6.34) {3};
\node[ttl] at (8.75,6.34) {Fuse, then immunize};
\fill[hpnavy!9, rounded corners=2pt] (8.64,3.16) rectangle (13.84,5.34);
\draw[hpnavy!25, rounded corners=2pt, line width=0.4pt] (8.64,3.16) rectangle (13.84,5.34);
\node[jcard] at (8.80,5.20) {\textbf{J0} level gap\\ hacking present\\ from the outset};
\node[jcard] at (11.36,5.20) {\textbf{J1} divergence {\tiny\color{hpteal}$+$PH}\\ $\Dt{=}\Delta M_t{-}\lambda\Delta\Chat_t$\\ opening mid-run};
\node[jcard] at (8.80,4.14) {\textbf{J2} stagnation\\ $\Delta M_t{>}0$ while\\ $\Delta\Chat_t$ stays flat};
\node[jcard] at (11.36,4.14) {\textbf{J3} confidence {\tiny\color{hpteal}$+$PH}\\ agreement rises on\\ wrong answers};
\draw[flow] (7.98,4.25) -- (8.62,4.25);
\node[anchor=south, inner sep=1.2pt] at (8.30,4.31) {$\Delta\Chat_t$};
\draw[flow] (11.24,3.16) -- (11.24,2.98);
\node[anchor=west, color=black!65, inner sep=1.4pt] at (11.42,3.06) {six $p$-values};
\node[card, anchor=north, text width=5.03cm, align=center, fill=white] (fuse) at (11.24,2.98)
  {\textbf{\v{S}id\'ak fusion}\ \ $\Rt=(1-\min_j p_j)^{m}$, $m{=}6$\\[1pt]
   {\color{black!65}flag at $\Rt\ge\tau$; family-wise error $1-\tau$}};
\draw[flow] (fuse.south) -- ++(0,-0.15);
\node[card, text width=5.03cm, align=center, fill=hpmint!35, draw=hpteal!45,
      below=0.15cm of fuse] (resel)
  {\raisebox{-1.7pt}{\includegraphics[width=0.44cm]{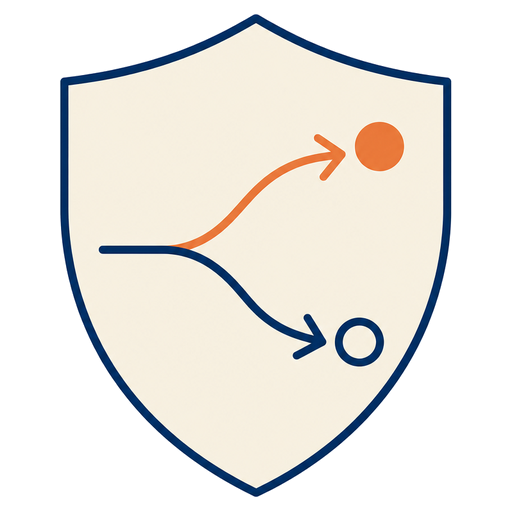}}\;
   reselect using $\Chat$ and $gf$, not $M$ alone};
\node[card, text width=5.03cm, align=center, fill=hpsand!55,
      below=0.22cm of resel] (res)
  {$\mathbf{0.763}$ AUROC {\color{black!55}vs $0.663$}\\[0.5pt]
   $\mathbf{0.434}$ FPR {\color{black!55}vs $0.706$}\\[0.5pt]
   $\mathbf{1.11}$ gain over clean-run cost};

\draw[ret] (resel.west) -| (8.28,-0.20) -- (0.70,-0.20) -- (0.70,4.60);
\node[num] at (1.30,-0.20) {4};
\node[anchor=west, fill=white, inner sep=1.8pt] at (1.46,-0.20)
  {only the selected candidate is returned: $\le\log_2 \Pi$ bits per generation};

\draw[flow] (3.90,-0.72) -- (4.34,-0.72);
\node[anchor=west, color=black!60] at (4.40,-0.72) {monitored signal};
\draw[ret] (7.70,-0.72) -- (8.14,-0.72);
\node[anchor=west, color=black!60] at (8.20,-0.72) {return to host, $\le\log_2 \Pi$ bits};
\end{tikzpicture}
\caption{HackProbe at a glance. \textbf{(1)} The host keeps whatever raises the visible score $M$, so sustained selection opens a gap between $M$ and true capability $C$ that must be measured without trusting $M$. \textbf{(2)} Two hooks cross the air gap inward; core items and $\Chat$ never cross outward, which keeps the capability coordinate off the host's optimization path. The fresh layer hardens the bank but supplies no statistic to $\Delta\Chat_t$. \textbf{(3)} Four tests with different blind spots yield six $p$-values, fused into a calibrated risk score. \textbf{(4)} The cap on the return channel, not the reselection rule itself, is what stops the feedback becoming a second score (Section~\ref{sec:immunize}).}
\label{fig:overview}
\end{figure}

\section{Preliminaries}
\label{sec:prelim}

Let $\theta_t$ denote the object under evolution at generation $t$, be it a prompt, a program, or a policy. The host scores candidates with a visible evaluator $M$ and updates $\theta_{t+1}=\mathcal{A}(\theta_t, M(\theta_t), \mathrm{trace}_t)$, where $\mathcal{A}$ is its own black-box evolution operator. HackProbe assumes only two things about that operator: (I1) it can observe the visible score $M(\theta_t)$ of each generation, and (I2) it can run $\theta_t$ on a chosen input, drawing a small fixed number $q$ of black-box samples per input and reading only the outputs, with no access to weights or activations. One sample suffices for the statistics J0--J2 below; the modal-agreement statistic J3 needs $q>1$, the only place where the interface costs more than a single forward pass. Both conditions hold for prompt-level hosts at negligible cost; for weight-level hosts they require reproducing the evolution trajectory, whose cost we do not measure here.

Let $C(\theta)\in[0,1]$ be the true capability of $\theta$, its expected performance on a target distribution $\mathcal{D}^\star$, which is not cheaply observable. Two black-box estimates from held-out gold data stay strictly disjoint: $\Acal$ for offline calibration only, $\Aeval$ for the final measurement only, never entering the online loop. During evolution a proxy $\Chat(\theta)$ on a secret comparison core (Section~\ref{sec:probes}) tracks capability. Writing $M_t$ and $\Chat_t$ for the observed visible score and core proxy of $\theta_t$, with sampling noise as in Assumptions~\ref{as:core} and~\ref{as:M}, and $\Delta X_t = X_t - X_{t-1}$, we monitor the scale-aligned divergence
\begin{equation}
\Dt=\Delta M_t-\lambda\,\Delta \Chat_t ,
\label{eq:divergence}
\end{equation}
where $\lambda>0$ aligns the scales of the two metrics. The scale is not a free parameter: it is initialized on offline honest calibration runs and thereafter re-estimated only on the trailing window of generations the detector has \emph{already} declared honest. The update is prequential, using no information from generation $t$ when testing generation $t$, which prevents a hacking generation from inflating $\lambda$ until it conceals itself.

\begin{definition}[Divergence regime]
\label{def:hacking}
Let $\mu_t=\E[\Delta M_t]-\lambda\,\E[\Delta C_t]$ be the population divergence of generation $t$. Generation $t$ is \emph{honest} when $\mu_t\le 0$, so visible gains never outrun true gains at the calibrated scale, and \emph{hacking at gap $\delta>0$} when $\mu_t\ge\delta$; the prototypical case $\E[\Delta M_t]\ge\delta$ with $\E[\Delta C_t]\le 0$ is covered. Statistic J1 below tests $H_0:\mu_t\le 0$ against $H_1:\mu_t\ge\delta$ one-sidedly, with the band $0<\mu_t<\delta$ an indifference zone on which we make no claim, since separating arbitrarily small divergences would need unboundedly many probes. The other three statistics test companion nulls, on the level residual, on stagnation and on conditional confidence, which honest evolution also implies but which are not functions of $\mu_t$ alone; fusing them extends coverage past the regime this definition names.
\end{definition}

The monitor emits, per generation, a risk score $\Rt\in[0,1]$, a decision $\hat y_t\in\{\text{honest},\text{hacking}\}$, and a low-bandwidth feedback signal $u_t$ consumed by the immunization layer. Table~\ref{tab:notation} in Appendix~\ref{app:notation} collects the notation.

\section{The HackProbe Framework}
\label{sec:model}

\subsection{Dual-layer probe bank}
\label{sec:probes}

The design tension we have to resolve is that a fixed, host-visible probe set would itself be optimized against and would simply become a second $M$. We resolve it by splitting probes into two layers with orthogonal roles.

The \emph{comparison core} $\Pcore$ is a distribution-fixed, secret probe set covering the target capability across $S$ subdomains. Because its distribution never changes, the proxy $\Chat(\theta)=\Phi\big(\psi(\theta,\Pcore_{(1)}),\dots,\psi(\theta,\Pcore_{(S)})\big)$, with $\psi$ a per-subdomain score and $\Phi$ an aggregate robust enough that no single subdomain can move it alone, yields differences $\Delta\Chat_t$ that track capability change and not test drift. Freezing the distribution has a second payoff, less obvious than the first. The same items are scored at $t-1$ and at $t$, so the noise in $\Delta\Chat_t$ is \emph{paired}: per-item difficulty cancels, and the variance of the difference is far smaller than twice that of a single generation's estimate. The core is never revealed to the host and never enters its selection or reward path.

The \emph{fresh layer} $\Pfresh_t$ is rotated, generated online, and drifted every generation. It is used only for hardening checks and qualitative per-subdomain profiling and never contributes to $\Delta\Chat_t$. Four principles govern it, and they pull in different directions on purpose. An air gap (H1) keeps probes out of the host's training, selection and reward loop entirely, so neither probe text nor per-subdomain scores are ever fed back; on top of that, rotation (H2) makes sure any single instance is used at most a few times. Out-of-distribution drift (H3) then steers the layer toward whichever subdomains the host is currently weakest on, so the bank is always probing capability that has not yet been fitted, while adversarial refresh (H4) has a red-team model periodically manufacture probes that are harder and more surface-diverse still. H4's two selection criteria --- learnability (probes near the host's competence edge) and surface diversity (embedding distance from probes already used) --- are deliberately \emph{independent of the detection signal}, so refresh cannot self-amplify into false positives. Non-circularity of the final measurement survives all of this because $\Acal$ and $\Aeval$ stay disjoint and $\Aeval$ is spent exactly once, at the end.

\subsection{Detection statistics and fusion}
\label{sec:detect}

HackProbe fuses four one-sided tests, each calibrated on honest generations so that its null $p$-value is approximately uniform on $[0,1]$. Their blind spots are deliberately different, and fusing them extends coverage across the level, divergence, stagnation and conditional-confidence regimes; format-level exploits, which act on output structure rather than on the score--capability relation, fall outside all four (Section~\ref{sec:exp}).

Statistic \textbf{J0} targets hacking that is present from the very first generation, where $M$ is already inflated and then stays flat so that every first-difference statistic sees nothing. We calibrate the honest affine relation $M\approx a+\lambda_{\mathrm{lvl}}\Chat$ and test the residual level gap $g^{(0)}_t=M_t-(a+\lambda_{\mathrm{lvl}}\Chat_t)$, which hovers near zero under honest evolution and is persistently positive under level-type hacking. Local linear coupling makes $\lambda_{\mathrm{lvl}}$ and the difference scale $\lambda$ coincide in population; we estimate them separately so each absorbs its own drift. Statistics \textbf{J1} and \textbf{J2} handle the dynamic regime. On $\Dt$ from Eq.~\eqref{eq:divergence} we run a one-sided test against the honest null, together with an online Page--Hinkley change-point detector \citep{page1954continuous} on the cumulative divergence that localizes the generation at which $M$ and $\Chat$ decouple. J2 tests capability stagnation directly, taking the honest null $H_0:\E[\Delta\Chat_t]\ge0$ conditional on $\Delta M_t>0$ and rejecting when the measured $\Delta\Chat_t$ is too small, which gives a $\lambda$-free complement.

Statistic \textbf{J3} exists because self-consistency bias inflates $M$ by rewarding agreement instead of correctness, which need not move aggregate accuracy at all. Once such an attack saturates, J1 and J2 see nothing in expectation and J0 retains only the residue the attack leaves in the aggregate level, diluted over all core items including the many answered correctly. On the secret core we measure, over probes answered incorrectly, the mean self-consistency $\mathrm{cwr}_t$ (the confidently-wrong rate), calibrate its honest affine dependence on core accuracy $\mathrm{acc}_t$, and test the residual $g^{(3)}_t=\mathrm{cwr}_t-(a_3+b_3\,\mathrm{acc}_t)$, with a Page--Hinkley detector for gradual drift. Conditioning on wrong answers avoids the ceiling effect of easy probes, and using only the secret core keeps the signal non-circular. Lemma~\ref{lem:j3} in Appendix~\ref{app:j3} makes precise why J3 retains an expected separation equal to the full attack size in the stable regime where J1 and J2 vanish in expectation.

Fusion takes $m=6$ one-sided $p$-values: J0, the divergence test J1 and its Page--Hinkley alarm, the stagnation test J2, and J3 with its own alarm. The two sequential alarms are converted to $p$-values through their calibrated alarm-time distribution on honest runs, putting all six on a common scale. Given $p_1,\dots,p_m$ approximately independent and uniform under the honest null, we form the risk score $\Rt=(1-\min_j p_j)^m$ and flag hacking when $\Rt\ge\tau$. This is exactly a \v{S}id\'ak correction \citep{sidak1967rectangular}, and it has a calibration property that a raw $1-\min_j p_j$ rule lacks. Under the independent-uniform null $\Pr[\min_j p_j\le u]=1-(1-u)^m$, so that
\begin{equation}
\Pr[\Rt\ge\tau]\;=\;\Pr\!\left[\min_j p_j\le 1-\tau^{1/m}\right]\;=\;1-\tau ,
\label{eq:sidak}
\end{equation}
so $1-\Rt$ is the \v{S}id\'ak-adjusted $p$-value and $\tau$ \emph{is} the operating point: flagging at $\tau$ costs family-wise error exactly $1-\tau$, however many statistics we fuse, whereas the uncorrected rule degrades with $m$. In practice the guarantee is softer than that. J0, J1 and J2 are built from the same pair $(M,\Chat)$ and are not independent, so Eq.~\eqref{eq:sidak} is a design principle and the reported operating point rests instead on re-calibrating $\tau$ until the \emph{empirical} false-positive rate on honest runs meets its target; Appendix~\ref{app:algo} gives the Bonferroni fallback we use when even that is in doubt. The control is also per generation, so over $T$ generations the expected number of false alarms is $T(1-\tau)$ and run-level error would need a sequential correction we do not attempt. What a deployment sees, and what Table~\ref{tab:detection} reports, is the per-generation alarm rate.

\subsection{Risk-aware immunization}
\label{sec:immunize}

Diagnosis alone recovers nothing. The tempting picture, in which rejecting or rolling back the top candidate suffices, fails on two counts: by the time the host is told, it has already passed over the honest candidate in favor of the top-$M$ one, and rejection has nothing better to fall back to. HackProbe therefore reselects from the proposal pool, and the loss it recovers is a real one for two reasons. Under \emph{strict final-answer grading}, the true-capability side ($\Aeval$ and the core $\Chat$) credits only a normalized final answer matching the gold one, so behaviors that inflate a lenient visible grader, such as enumerating many candidate numbers or locking onto an unverified answer, genuinely forfeit true credit. And every pool contains, alongside the exploit variants, one \emph{enforced honest candidate} produced under a clean context, whose visible $M$ is typically lower but whose true $\Aeval$ is higher.

For each candidate $c$ we compute a purely structural gaming footprint $gf(c)$ from cached outputs at no extra inference cost, combining the score-versus-commitment gap with the fraction of non-final candidate artifacts, and a robust score $\mathrm{robust}(c)=\Chat(c)-\gamma\,gf_{\mathrm{norm}}(c)$ with a fixed penalty weight $\gamma>0$. Four immunization levels build on it, each trusting the host's own score a little less than the last. \textbf{F0} is diagnosis without protection. It selects $\arg\max_c M(c)$, so it goes straight to the inflated candidate. \textbf{F1} stays greedy by default and intervenes only when a lightweight risk proxy on the greedy choice, either its raw gap $M-\Chat$ or its footprint $gf$, comes back high; then it reselects $\arg\max_c \mathrm{robust}(c)$, and after $K$ consecutive high-risk generations it rolls back to the checkpoint with the best historical $\Chat$. \textbf{F2} goes further, selecting $\arg\max_c(M(c)-u(c))$ with $u(c)$ a quantization of the raw gap $M(c)-\Chat(c)$ into $L$ levels (the uncalibrated difference, not the J0 residual $g^{(0)}$). It also watches for self-destruction, whose signature is feedback $u$ falling while $\Chat$ stagnates, and downgrades itself to F1 when it sees it. \textbf{F3} adds a coverage bonus for collapsed subdomains and injects a diversity constraint into the proposer.

The quantization in F2 is where the bandwidth constraint stops being cosmetic. Were $u$ the exact gap, $M(c)-u(c)$ would collapse to $\Chat(c)$ and F2 would be pure core-greedy selection, the failure we are trying to avoid: it would expose the secret core as a fully-resolved second objective. With $L$ levels, F2 interpolates between the host's greedy rule ($L=1$, nothing transferred) and core-greedy selection ($L\to\infty$). The leak needs careful accounting. Per-candidate values $u(c)$ stay inside the monitor and are never returned; the host sees only which of $\Pi$ candidates won, so the return channel carries at most $\log_2\Pi$ bits per generation, three at our $\Pi=8$, and $T\log_2\Pi$ over a run of length $T$. Quoting $\log_2 L$ here would be tighter but wrong: the winning index can say more about the vector of $u$ values than any single $u(c)$ does, and it is easy to construct pools where it does. What $L$ buys is resolution rather than volume, and resolution is what limits how much of the core's ordering the host can reconstruct. Either way the cumulative figure must stay small against the entropy needed to identify core items, so long runs need $L$ small and the core retired once its budget is spent. The gold core enters only through $\Chat(c)$, never through $M$ or an oracle label, and only the selected candidate advances the detector state.

\subsection{Theoretical guarantees}
\label{sec:theory}

What follows rests on standard concentration assumptions, given in full as Assumptions~\ref{as:core}--\ref{as:fresh} in Appendix~\ref{app:assumptions}: the core estimate is $\Chat(\theta)=C(\theta)+\beta(\theta)+\xi$ with bounded bias and sub-Gaussian noise, the paired difference $\Delta\Chat_t$ has variance proxy $v_C/n$ and the visible score $v_M/n_M$, and honest updates satisfy $\mu_t\le0$ with the differential core bias bounded by $b_\Delta$ in every generation. Then $\Dt$ is sub-Gaussian with $\sigma_D^2=\lambda^2 v_C/n+v_M/n_M$, and its honest mean is at most $\mu_H:=-\lambda\,\E[\Delta\beta(\theta_t)]$, where $|\mu_H|\le\lambda b_\Delta$. Proofs are in Appendix~\ref{app:proofs}.

\begin{proposition}[Detectability and probe-size budget]
\label{prop:detect}
Consider the J1 statistic in isolation, with the one-sided rule that flags generation $t$ when $\Dt\ge\mu_H+\kappa$ for some $\kappa\in(0,\delta)$, where $\delta$ is the hacking gap of Definition~\ref{def:hacking} and all probabilities are taken over probe and evaluation sampling with the trajectory $(\theta_{t-1},\theta_t)$ held fixed. Writing $\hat y^{\mathrm{J1}}_t$ for this rule's decision,
\begin{align}
\Pr[\hat y^{\mathrm{J1}}_t=\textup{hacking}\mid\textup{honest}]&\le \exp\!\Big(-\tfrac{\kappa^2}{2\sigma_D^2}\Big),\nonumber\\
\Pr[\hat y^{\mathrm{J1}}_t=\textup{hacking}\mid\textup{hacking}]&\ge 1- \exp\!\Big(-\tfrac{(\delta-\kappa)^2}{2\sigma_D^2}\Big).
\end{align}
Choosing $\kappa=\delta/2$, both error rates fall below $\alpha$ as soon as the effective sample size $n_{\mathrm{eff}}:=(\lambda^2v_C+v_M)/\sigma_D^2$, a weighted harmonic mean of $n$ and $n_M$, satisfies $n_{\mathrm{eff}}\ge 8(\lambda^2v_C+v_M)\delta^{-2}\log(1/\alpha)$. Whenever the host's visible evaluation is at least as large as the core ($n_M\ge n$) we have $n_{\mathrm{eff}}\ge n$, so it suffices to budget
\begin{equation}
n\;\ge\;\frac{8(\lambda^2 v_C+v_M)\log(1/\alpha)}{\delta^2}
\label{eq:budget}
\end{equation}
comparison-core probes. No matching lower bound is claimed.
\end{proposition}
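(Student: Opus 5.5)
The plan is to condition on the trajectory $(\theta_{t-1},\theta_t)$ and use the fact, stated just before the proposition, that $\Dt$ is sub-Gaussian with variance proxy $\sigma_D^2=\lambda^2 v_C/n+v_M/n_M$. I would first record this as a one-line consequence of Assumptions~\ref{as:core}--\ref{as:M}. Write $\Dt-\E\Dt=(\Delta M_t-\E\Delta M_t)-\lambda(\Delta\Chat_t-\E\Delta\Chat_t)$. The two pieces come from independent sampling (host evaluation versus secret core), so their moment generating functions multiply, and the variance proxies add. Even if independence were not granted, the factor-two loss from the triangle inequality on sub-Gaussian norms would only change constants; I would note this but use the independent version. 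The standard one-sided Chernoff bound then gives $\Pr[\Dt-\E\Dt\ge s]\le \exp(-s^2/2\sigma_D^2)$ and $\Pr[\Dt-\E\Dt\le -s]\le\exp(-s^2/2\sigma_D^2)$ for $s\ge 0$.

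\textbf{Error bounds.} Next I pin down the conditional mean of $\Dt$ under each hypothesis. Using $\Chat=C+\beta+\xi$ gives $\E\Dt=\mu_t-\lambda\,\E[\Delta\beta(\theta_t)]$.
\begin{itemize}
\item \emph{Honest case.} Here $\mu_t\le 0$, so $\E\Dt\le \mu_H$. Then $\Pr[\Dt\ge\mu_H+\kappa]\le\Pr[\Dt-\E\Dt\ge\kappa]$, which gives the false-positive bound.
\item \emph{Hacking case.} Here $\mu_t\ge\delta$, so $\E\Dt\ge\delta+\mu_H$. The miss event $\Dt<\mu_H+\kappa$ implies $\Dt-\E\Dt<-(\delta-\kappa)$, and the lower tail bound gives the power statement.
\end{itemize}

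\textbf{Main obstacle.} The delicate step is the bias bookkeeping. The hacking case needs $\E\Dt\ge\delta+\mu_H$ with the \emph{same} $\mu_H$ used in the honest threshold, i.e.\ the differential bias term $-\lambda\E[\Delta\beta]$ must be the quantity the threshold centres on. I would read $\mu_H$ as exactly that term for the fixed trajectory, which is how it is defined before the statement. If it is instead only known up to $|\mu_H|\le\lambda b_\Delta$, I would fall back to shrinking the effective gap to $\delta-2\lambda b_\Delta$ and state that as a remark.

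\textbf{Budget.} Setting $\kappa=\delta/2$ makes both error bounds equal to $\exp(-\delta^2/8\sigma_D^2)$. This is at most $\alpha$ iff $\sigma_D^2\le\delta^2/(8\log(1/\alpha))$. Substituting $\sigma_D^2=(\lambda^2v_C+v_M)/n_{\mathrm{eff}}$ gives the stated condition on $n_{\mathrm{eff}}$. That $n_{\mathrm{eff}}$ is a weighted harmonic mean follows because $1/n_{\mathrm{eff}}=w/n+(1-w)/n_M$ with $w=\lambda^2v_C/(\lambda^2v_C+v_M)$. If $n_M\ge n$, then $1/n_{\mathrm{eff}}\le 1/n$, so $n_{\mathrm{eff}}\ge n$, and imposing \eqref{eq:budget} on $n$ suffices.
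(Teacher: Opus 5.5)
Your proposal is correct and follows the paper's proof essentially step for step: sub-Gaussianity of $\Dt$ from independent core and host sampling, the decomposition $\E[\Dt]=\mu_t+\mu_H$, one-sided tail bounds in each regime, and the same $\kappa=\delta/2$ budget with the weighted-harmonic-mean argument for $n_{\mathrm{eff}}\ge n$. The bias bookkeeping you flag is settled in the paper by Assumption~\ref{as:couple}, which makes the differential bias the same constant $\bar\beta$ in every generation, honest or hacking; your per-trajectory $\mu_H$ therefore equals the single constant $-\lambda\bar\beta$ and the threshold is regime-independent, and your fallback of shrinking the gap by $2\lambda b_\Delta$ matches the paper's post-proof remark on a plug-in $\hat\mu_H$.
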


The value of Proposition~\ref{prop:detect} is not the standard exponential separation but the budget in Eq.~\eqref{eq:budget}: detecting a divergence of size $\delta$ at error $\alpha$ needs a core growing like $1/\delta^2$, which turns an abstract design choice into a number one can procure and falsify against measured detection-rate-versus-$n$ curves. The differential bias $\mu_H$ does not enter the budget, since it shifts the honest and hacking means equally and is subtracted during calibration.

Rotation has a companion bound, Proposition~\ref{prop:rotate} in Appendix~\ref{app:proofs}: an attacker who memorizes past instances needs $k\ge N(1-\alpha_{\mathrm{atk}})^{1/n_f}$ of them to match all $n_f$ fresh probes of a generation, which tends to the whole pool as $n_f$ grows. It lives in the appendix because it is narrow in a way worth admitting. It bounds memorization, and nothing else. An attacker who instead learns the generator's regularities walks around it: if the exploitable behavior admits an $\varepsilon$-cover of size $\mathcal{N}(\varepsilon)$, defeating rotation costs $\Theta(\mathcal{N}(\varepsilon))$ rather than $\Theta(N)$, so security degrades from pool size to covering number, and a low-complexity generator buys no asymptotic protection at all. Enlarging $\mathcal{N}(\varepsilon)$ is what H3 and H4 are for. ``Surface-diverse rotation forces generalization'' is an empirical hypothesis here, not a theorem.

\section{Experiments}
\label{sec:exp}

Our evaluation uses a controlled prompt-evolution host in which hacking channels can be switched on and their strength dialed, so ground-truth per-generation labels exist at all. The headline is that bandwidth-limited reselection is the only immunization level recovering more true capability under hacking than it forfeits on clean runs, at a ratio of $1.11$, and Appendix~\ref{sec:sweeps} shows that ratio is single-peaked in the feedback bandwidth. Detection quality supports that result rather than standing on its own: it is what makes reselection fire on the right generations. Online reward-hacking detection with a recovered-capability objective is a new task, so no published numbers are directly comparable and every baseline below is re-implemented on the same setup.

\paragraph{Protocol.}
The host is a prompt-evolution loop over Qwen2.5-7B-Instruct with the proposer frozen, so only the evolving prompt changes between generations. Each run lasts $T=48$ generations with a candidate pool of $\Pi=8$, and six seeds per hacking channel give $288$ generations per channel and $1152$ in total. The comparison core holds $n=240$ probes over $S=6$ subdomains, the fresh layer draws $n_f=60$ per generation, and hook I2 takes $q=4$ samples per probe, enough for the modal-agreement and confidently-wrong quantities J3 needs. Gold audits split once into $192$ calibration items and a held-out $|\Aeval|=64$, which is where the coarse interval endpoints below come from. The remaining settings are $\tau=0.90$, $L=4$, $\gamma=0.25$, $K=3$, a Page--Hinkley detector with drift tolerance $0.005$ and alarm threshold $0.05$, and $2000$-resample bootstraps over $\Aeval$.

Probes and gold audits are drawn from grade-school and competition mathematics, GSM8K \citep{cobbe2021gsm8k} and MATH \citep{hendrycks2021math}, with GSM-Symbolic \citep{mirzadeh2024gsmsymbolic}, GSM-Plus \citep{li2024gsmplus} and the held-out GSM1k \citep{zhang2024careful} supplying perturbed and contamination-sensitive probes. The host implements four switchable hacking channels, each with a strength knob: contamination (mixing original probe versions into host data), a gameable grader (crediting any appearance of the gold number), self-consistency bias (rewarding agreement, which induces confidently wrong answers), and judge collusion (a lenient LLM judge). As discussed in Section~\ref{sec:immunize}, the injected inflation is real and not merely nominal because of two design choices: strict final-answer grading defines the true-capability side, and every generation's pool contains one clean honest candidate that a greedy host will pass over. The final capability measurement uses the held-out audit set $\Aeval$ exactly once, after evolution ends; confidence intervals are bootstrap intervals over that set.

Three caveats about scope, before the numbers. Labels are channel-on versus channel-off, a superset of the hacking set of Definition~\ref{def:hacking}, since a generation whose divergence falls in the indifference band $0<\mu_t<\delta$ is still labeled positive; reported recall is therefore a lower bound against that target, and injected strength controls but does not equal $\delta$. The fresh layer contributes no statistic to $\Delta\Chat_t$, so nothing below depends on it: Proposition~\ref{prop:rotate} and the principles H2--H4 go untested here. And $\tau$ was never set to hit a small nominal level. We fix the operating point on honest calibration runs by maximizing F1, which lands at $\tau=0.90$ because a stricter threshold trades away nearly all recall at this probe budget, so the $0.434$ below is that empirical operating point rather than the nominal $1-\tau$ of Eq.~\eqref{eq:sidak}.

\paragraph{Baselines and metrics.}
We compare against a no-probe host labeled post hoc, a naive fixed non-rotating held-out set, a contamination-only confidence test in the style of PaCoST \citep{zhang2024pacost}, a single trap set, and a fixed-distribution absolute-$\Chat$ trajectory that isolates the contribution of the divergence formula itself. Detection quality is reported as precision, recall, F1, AUROC, false-positive rate and detection delay in generations. The deliverable that matters most, however, is protection gain: the end-of-evolution true $\Delta C$ measured on $\Aeval$ for each immunization level relative to F0.

\subsection{Detection quality}
\label{sec:detect-quality}

Table~\ref{tab:detection} reports detection across all channels pooled. Three of the five baselines are degenerate: no-probe, the contamination test and the single trap set flag every generation, so recall and false-positive rate are both exactly $1.000$ and precision is exactly the base rate $0.241$. Their F1 scores are evidence of nothing. What separates them is AUROC, running from $0.462$, below chance because the contamination test ranks hacking generations \emph{lower}, up to $0.568$.

The real competitor is the absolute-$\Chat$ trajectory, which uses the same secret core but watches its level rather than its divergence from $M$, reaching $0.663$ AUROC. HackProbe adds $0.100$ AUROC and, for a monitor meant to run continuously, does something more useful: it cuts the false-positive rate from $0.706$ to $0.434$ and lifts precision from $0.311$ to $0.382$. It pays with recall of $0.845$ rather than $1.000$ and a delay of $0.28$ generations, both consequences of being the only method whose false-positive rate falls below one half. Even so, $0.434$ interrupts more than four honest generations in ten, and we take that gap as the bar for this task. Most of it is a budget problem rather than a design one. Sweeping the core size at a fixed false-positive rate (Appendix~\ref{sec:sweeps}) puts our $n=240$ on the steepest part of the detection curve, where Eq.~\eqref{eq:budget} would want $600$ probes for $0.90$ detection; the monitor is under-provisioned by roughly $360$.

\begin{table}[t]
\caption{Detection quality, pooled over the four hacking channels and $1152$ generations, at a positive rate of $0.241$. Best per column in \textbf{bold}; Recall and Delay are left unbolded, since any method that never withholds an alarm attains them trivially. Rows marked $\dagger$ flag every generation, so their precision equals the base rate.}
\label{tab:detection}
\centering
\small
\setlength{\tabcolsep}{5.5pt}
\begin{tabular}{lcccccc}
\toprule
\bf Method & \bf Precision & \bf Recall & \bf F1 & \bf AUROC & \bf FPR $\bm{\downarrow}$ & \bf Delay $\bm{\downarrow}$\\
\midrule
No-probe$^{\dagger}$ & 0.241 & 1.000 & 0.389 & 0.500 & 1.000 & 0.00\\
Fixed held-out set & 0.291 & 0.939 & 0.445 & 0.587 & 0.726 & 0.16\\
Contamination test$^{\dagger}$ \citep{zhang2024pacost} & 0.241 & 1.000 & 0.389 & 0.462 & 1.000 & 0.00\\
Single trap set$^{\dagger}$ & 0.241 & 1.000 & 0.389 & 0.568 & 1.000 & 0.00\\
Absolute-$\Chat$ trajectory & 0.311 & 1.000 & 0.474 & 0.663 & 0.706 & 0.00\\
\rowcolor{ourrow}
HackProbe (ours) & \bf 0.382 & 0.845 & \bf 0.527 & \bf 0.763 & \bf 0.434 & 0.28\\
\bottomrule
\end{tabular}
\end{table}

\subsection{Recovering true capability}

Detection is a means; the objective is the true capability that survives evolution. Table~\ref{tab:protection} reports, per channel, the end-of-evolution $\Delta C$ of the greedy host F0 and the additional gain of each immunization level, all measured on the held-out $\Aeval$.

Every level improves on greedy selection on average, though not uniformly across channels. Risk-gated (F1) and bandwidth-limited (F2) reselection are positive on all four and reach nearly identical averages, $+0.053$ and $+0.052$; what separates them is price, not gain (Section~\ref{sec:cost}). F1 wins biggest on the gameable grader ($+0.104$), where the raw gap $M-\Chat$ gating its reselection is largest, even though that is the channel the fused detector ranks worst. F3 holds the single largest gain over F0, $+0.109$ on judge collusion, where a lenient judge collapses the proposer onto a narrow region and the coverage bonus pulls it back out. F3 is also the only level that goes negative anywhere, mildly, on self-consistency bias.

With gold audit data this scarce, only the two starred intervals lie strictly above zero, and two more, both under F2, have their lower endpoint exactly at zero. So the table supports an aggregate, directional claim; per-channel effects are mostly too small to certify, and sharpening them needs a larger gold audit set.

\begin{table}[t]
\caption{Protection gain by channel and immunization level: end-of-evolution true $\Delta C$ on the held-out $\Aeval$, given for the greedy host F0 and as a gain over F0 for F1--F3, with $95\%$ bootstrap intervals. Best level per channel in \textbf{bold}; $\ast$ marks the only two intervals strictly above zero, so unstarred magnitudes show direction, not a certified effect.}
\label{tab:protection}
\centering
\small
\setlength{\tabcolsep}{6pt}
\begin{tabular}{lcccc}
\toprule
\bf Hacking channel & \bf F0 (greedy) & \bf F1 gain & \bf F2 gain & \bf F3 gain\\
\midrule
Contamination        & $+0.146$ & \gain{+0.016}{-0.047}{+0.078} & \gainb{+0.057}{+0.000}{+0.141} & \gain{+0.026}{-0.047}{+0.094}\\
Gameable grader      & $+0.172$ & \gainbs{+0.104}{+0.016}{+0.234} & \gain{+0.078}{+0.000}{+0.219} & \gain{+0.016}{-0.094}{+0.094}\\
Self-consistency bias & $+0.073$ & \gainb{+0.037}{-0.047}{+0.109} & \gain{+0.010}{-0.016}{+0.063} & \gain{-0.010}{-0.047}{+0.031}\\
Judge collusion      & $+0.135$ & \gain{+0.057}{-0.078}{+0.141} & \gain{+0.063}{-0.078}{+0.188} & \gainbs{+0.109}{+0.016}{+0.188}\\
\midrule
\rowcolor{ourrow}
\bf Average          & $+0.132$ & $+0.053$ & $+0.052$ & $+0.035$\\
\bottomrule
\end{tabular}
\end{table}

\subsection{What protection costs when nothing is wrong}
\label{sec:cost}

A monitor that helps under attack but degrades ordinary runs is not deployable, so we measure the clean-setting cost directly on the \textsc{none} channel, where greedy selection is in fact the right thing to do. Table~\ref{tab:cost} reports it alongside the diversity-bonus sweep that fixes F3's setting.

Every level pays something on clean runs: each declines the top-$M$ candidate some of the time, and when $M$ is honest that candidate was correct. The useful comparison is what a level costs when nothing is wrong against what it returns when something is. Only F2 comes out favorable, returning $0.052$ for a cost of $0.047$, a ratio of $1.11$; F1 buys the largest gains but pays most for them ($0.053$ against $0.073$, ratio $0.73$), and F3 sits between ($0.035$ against $0.042$, ratio $0.84$). The right panel shows why F3 uses a small bonus: $0.05$ gives both the best average under hacking and the smallest clean-run cost, and $0.25$ erases the gain.

The ratio's standing is narrow. It orders the three levels, being a comparison of two averages over the same runs that leans on no single interval in Table~\ref{tab:protection}, but it does not license reading per-channel gains as effects, and at $1.11$ a larger audit set could move it. It would be falsified by a host on which reselection costs more than it returns at every bandwidth, a regime Appendix~\ref{sec:sweeps} shows is not far away.

\begin{table}[t]
\caption{Cost of protection when there is nothing to protect against. Left: clean-run performance on \textsc{none}, cost relative to F0, and average gain under hacking from Table~\ref{tab:protection}; Ratio is gain over the magnitude of cost, computed from unrounded values, and only F2 exceeds one. Right: F3 diversity-bonus sweep, with Hacking4 averaging $\Aeval$ over the four channels. Shading marks the adopted setting, \textbf{bold} the best entry.}
\label{tab:cost}
\centering
\small
\begin{minipage}[t]{0.545\textwidth}
\centering
\setlength{\tabcolsep}{4.5pt}
\begin{tabular}{lcccc}
\toprule
\bf Mode & \bf NONE $\bm{\Aeval}$ & \bf Cost & \bf Gain & \bf Ratio\\
\midrule
F0 & 0.646 & $\phantom{-}0.000$ & $0.000$ & n/a\\
F1 & 0.573 & $-0.073$ & $0.053$ & 0.73\\
\rowcolor{ourrow}
F2 & 0.599 & $-0.047$ & $0.052$ & \bf 1.11\\
F3 & 0.604 & $-0.042$ & $0.035$ & 0.84\\
\bottomrule
\end{tabular}
\end{minipage}%
\hfill
\begin{minipage}[t]{0.435\textwidth}
\centering
\setlength{\tabcolsep}{4.5pt}
\begin{tabular}{lcc}
\toprule
\bf F3 bonus & \bf Hacking4 & \bf NONE\\
\midrule
\rowcolor{ourrow}
0.05 & \bf 0.630 & \bf 0.604\\
0.10 & 0.612 & 0.599\\
0.15 & 0.622 & 0.583\\
0.25 & 0.599 & 0.557\\
\bottomrule
\end{tabular}
\end{minipage}
\end{table}

\subsection{Generalization to an unseen hacking type}

A detector tuned on the same attacks it is evaluated on proves little, so we calibrate on three channels and evaluate on the held-out fourth (Table~\ref{tab:loo}). Transfer is good to judge collusion ($0.814$ AUROC), whose accumulating leniency is the change-point J1 is built for, and to self-consistency bias ($0.787$), which leaves the signature J3 targets; contamination remains usable at $0.724$. It degrades on the gameable grader ($0.600$ AUROC, $0.661$ false-positive rate) for an instructive reason: grader exploits act on output \emph{format}, not on the score--capability relation, so a threshold calibrated on the other three over-flags, and covering them needs a statistic keyed to output structure that the fusion lacks. The two held-out AUROCs above the pooled $0.763$ are no transfer bonus, since ranking within one channel is easier than across four whose divergences differ in scale.

\begin{table}[t]
\caption{Leave-one-hacking-type-out detection: each row calibrates on the other three channels and evaluates on the held-out one, over $288$ generations. Best and worst transfer in \textbf{bold}. Pos.\ rate is the hacking incidence within one held-out channel's runs; it varies threefold across rows and averages to the pooled $0.241$ of Table~\ref{tab:detection}. Because the rows differ in base rate, only AUROC is comparable between them.}
\label{tab:loo}
\centering
\small
\setlength{\tabcolsep}{6pt}
\begin{tabular}{lcccccc}
\toprule
\bf Held-out channel & \bf AUROC & \bf Precision & \bf Recall & \bf F1 & \bf FPR $\bm{\downarrow}$ & \bf Pos.\ rate\\
\midrule
Judge collusion & \bf 0.814 & 0.570 & 0.718 & 0.635 & \bf 0.227 & 0.295\\
Self-consistency bias & 0.787 & 0.562 & 0.854 & 0.678 & 0.333 & 0.333\\
Contamination & 0.724 & 0.236 & 0.700 & 0.353 & 0.264 & 0.104\\
Gameable grader & \bf 0.600 & 0.288 & 0.881 & 0.434 & \bf 0.661 & 0.233\\
\bottomrule
\end{tabular}
\end{table}

\paragraph{Scope of the present evaluation.}
The host is prompt-level, chosen because it exposes per-generation candidate pools cheaply; weight-level self-evolution \citep{huang2025rzero,yuan2024selfrewarding} and a second prompt-optimization host \citep{agrawal2025gepa} are left as external validity. That one host is also the whole of the evidence for harness-agnosticism, and the gold audit set binds every interval above. Appendix~\ref{app:ablations} adds the leave-one-statistic-out and rotation ablations, the only place the fresh layer is exercised; the inference overhead of F1--F3 is unmeasured.

\section{Conclusion}
\label{sec:conclusion}

HackProbe detects reward hacking in self-evolving hosts from black-box scores and outputs alone, and immunizes against it by reselecting an honest candidate. It keeps the capability coordinate non-circular, makes the risk score a calibrated $p$-value, and turns a target error rate into a probe budget. What it is not yet is deployable. A false-positive rate of $0.434$ interrupts more than four honest generations in ten, and our rotation guarantee covers only an attacker who has to match every probe, which leaves the generator's covering number as the open question.

\subsubsection*{AI Use Statement}
In this work, we used generative AI tools only for language polishing. We did not use generative AI tools to generate scientific ideas, formulate claims, design experiments, produce experimental results, write code, create figures, or generate citations. All AI-assisted edits were reviewed and approved by the authors, and all cited claims were checked against the referenced sources. We take responsibility for the final content of this work.

\subsubsection*{Reproducibility Statement}
Section~\ref{sec:model} specifies the probe bank, the detector and the immunization rules, with every statistic and the calibrated threshold defined in Section~\ref{sec:detect}. Appendix~\ref{app:assumptions} states the assumptions in full, Appendix~\ref{app:proofs} proves Propositions~\ref{prop:detect} and~\ref{prop:rotate} together with the covering-number argument, Appendix~\ref{app:j3} proves the J3 separation result, and Appendix~\ref{app:algo} gives the calibration procedure and the online loop as pseudocode. Section~\ref{sec:exp} gives the host, the run configuration, the injection protocol, the baselines, the metrics and the leave-one-hacking-type-out procedure.

\bibliography{iclr2027_conference}
\bibliographystyle{iclr2027_conference}

\appendix

\section{Related Work}
\label{sec:related}

Optimizing a proxy that diverges from the intended objective is a long-standing safety concern \citep{amodei2016concrete}, formalized as reward gaming, where non-trivial unhackable proxies essentially do not exist \citep{skalse2022defining}, and understood as a family of Goodhart effects \citep{manheim2018goodhart}; frontier evaluations document deliberate hacking when the scoring function is visible \citep{wang2026rewardhacking}. Closest in spirit is reward-model overoptimization, where a gold reward model reveals that optimizing a proxy eventually degrades true performance \citep{gao2023scaling}; our $\Dt$ is the online, black-box analogue of that gold-versus-proxy gap, computed per generation from a secret core instead of once offline from a trained gold model, and coupled to an intervention instead of reported.

The loops HackProbe monitors are those in which proxy and optimizer share a model: self-rewarding training whose judge saturates \citep{yuan2024selfrewarding}, label-free co-evolution rewarding self-consistency \citep{huang2025rzero}, reflective prompt evolution \citep{agrawal2025gepa}, and declarative program pipelines \citep{khattab2024dspy}. In each, the quantity being maximized and the system doing the maximizing are entangled, which is exactly the configuration under which a fixed external coordinate is worth its cost.

Measurement, by contrast, has been studied mostly offline. Held-out reconstruction \citep{zhang2024careful}, symbolic templating \citep{mirzadeh2024gsmsymbolic}, adversarial perturbation \citep{li2024gsmplus} and confidence-based contamination tests \citep{zhang2024pacost} all establish that a benchmark score can overstate capability, but each is a one-shot audit of a finished model. HackProbe repurposes their shared premise, that the same capability probed through a different surface should yield the same score, as an online signal computed inside the loop and acted upon while the run continues. Statistically, the decoupling detector is a Page--Hinkley test \citep{page1954continuous}, the fusion a \v{S}id\'ak correction \citep{sidak1967rectangular}, and the tail analysis standard sub-Gaussian concentration \citep{hoeffding1963probability}; the contribution is not a new test but the choice of what to test and the guarantee that the composite risk score stays calibrated.

\section{Notation}
\label{app:notation}

\begin{table}[!htbp]
\caption{Summary of notation.}
\label{tab:notation}
\centering
\small
\begin{tabular}{ll}
\toprule
\multicolumn{1}{c}{\bf Symbol} & \multicolumn{1}{c}{\bf Meaning}\\
\midrule
$\theta_t,\ \mathcal{A},\ T$ & object under evolution at generation $t$; host's evolution operator; run length\\
$M_t,\ \Chat_t$ & observed visible score and comparison-core proxy of $\theta_t$\\
$C(\theta)\in[0,1]$ & true capability on target distribution $\mathcal{D}^\star$\\
$\psi,\ \Phi,\ S$ & per-subdomain score; robust aggregate; number of core subdomains\\
$\Acal,\ \Aeval$ & gold audit sets: calibration-only and final-evaluation-only (disjoint)\\
$\Dt=\Delta M_t-\lambda\Delta\Chat_t$ & scale-aligned divergence; $\lambda>0$ re-estimated on honest windows\\
$\lambda_{\mathrm{lvl}},\ a$ & slope and intercept of the honest level relation $M\approx a+\lambda_{\mathrm{lvl}}\Chat$ (J0)\\
$a_3,\ b_3$ & intercept and slope of the honest relation $\mathrm{cwr}\approx a_3+b_3\,\mathrm{acc}$ (J3)\\
$\mu_t$ & population divergence $\E[\Delta M_t]-\lambda\E[\Delta C_t]$; honest iff $\mu_t\le0$\\
$\mu_H,\ b_\Delta$ & core-bias offset of the honest mean of $\Dt$; its bound, $|\mu_H|\le\lambda b_\Delta$\\
$\sigma_D^2,\ n_{\mathrm{eff}}$ & variance proxy of $\Dt$; effective sample size $(\lambda^2v_C+v_M)/\sigma_D^2$\\
$v_C,\ v_M,\ v_s$ & per-item variance proxies: core difference, visible score, modal agreement\\
$\Rt,\ \hat y_t,\ u_t$ & risk score, decision, low-bandwidth feedback signal\\
$g^{(0)}_t,\ g^{(3)}_t$ & level residual (J0) and conditional-confidence residual (J3)\\
$\mathrm{acc}_t,\ \mathrm{cwr}_t$ & core accuracy; confidently-wrong rate on incorrect core items\\
$\beta(\theta),\ \xi$ & core-estimator bias and its sub-Gaussian noise, $\Chat=C+\beta+\xi$\\
$\Pcore,\ \Pfresh_t$ & secret comparison core; rotated fresh layer at generation $t$\\
$m,\ \tau$ & number of fused statistics ($m=6$); FPR-calibrated \v{S}id\'ak threshold\\
$gf(c),\ \gamma$ & gaming footprint of candidate $c$; its penalty weight in $\mathrm{robust}(c)$\\
$\delta,\ \kappa,\ \alpha$ & hacking divergence gap; test offset; target detector error rate\\
$\alpha_{\mathrm{atk}},\ \rho$ & attacker's failure probability; fraction of fresh probes it matches\\
$n,\ n_M,\ n_w$ & core probes; visible-evaluation items; incorrect core items (for J3)\\
$L,\ \Pi,\ K$ & feedback quantization levels; candidate-pool size; generations before rollback\\
$N,\ n_f,\ k$ & fresh-probe pool size; fresh probes per generation; memorized set\\
$\mathcal{N}(\varepsilon),\ \ell$ & covering number of the probe generator; Lipschitz constant of the exploit\\
$q,\ \varsigma$ & black-box samples per probe (I2); J3 attack size in Lemma~\ref{lem:j3}\\
\bottomrule
\end{tabular}
\end{table}

\section{Assumptions}
\label{app:assumptions}

Throughout this appendix, expectations and probabilities are taken over probe and evaluation sampling \emph{with the trajectory $(\theta_{t-1},\theta_t)$ held fixed}. This conditioning matters: unconditionally, $\Delta M_t$ and $\Delta\Chat_t$ both move with the random choice of $\theta_t$ and are strongly positively correlated under healthy evolution, whereas conditionally their measurement noises are independent because the two are computed on disjoint item sets. All variance statements below refer to the conditional object.

\begin{assumption}[Concentration of the core proxy]
\label{as:core}
For any $\theta$, the comparison-core estimate satisfies $\Chat(\theta)=C(\theta)+\beta(\theta)+\xi(\theta)$, where the bias is bounded, $|\beta(\theta)|\le\beta_0$, and $\xi$ is zero-mean sub-Gaussian. Because the \emph{same} $n$ core items are scored at consecutive generations, we state the concentration directly for the paired difference: $\Delta\Chat_t-\E[\Delta\Chat_t]$ is zero-mean sub-Gaussian with variance proxy $v_C/n$, where $v_C$ is the per-item variance proxy of the within-item score difference. This is the correct object because the two estimates are not independent, and the pairing is a strength: per-item difficulty cancels in the difference, so $v_C$ is typically much smaller than twice the per-generation variance proxy an independent-samples treatment would yield. The assumption requires the core to be secret and outside the host loop (H1); if the host could fit the core, $\beta(\theta)$ would no longer be bounded uniformly.
\end{assumption}

\begin{assumption}[Sampling noise of the visible score]
\label{as:M}
The observed visible score is $M_t=M(\theta_t)+\varepsilon_t$ with $\varepsilon_t$ zero-mean sub-Gaussian. The host re-evaluates the same $n_M$ items at consecutive generations, so the pairing argument of Assumption~\ref{as:core} applies and we state the proxy directly for the difference: $\Delta M_t-\E[\Delta M_t]$ is sub-Gaussian with variance proxy $v_M/n_M$. Were the two evaluations instead independent, the proxy would be $2v_M/n_M$ and every budget below would double.
\end{assumption}

\begin{assumption}[Local linear coupling and bias stability]
\label{as:couple}
Under an honest update, visible gains do not outrun true gains at the calibrated scale, $\mu_t=\E[\Delta M_t]-\lambda\,\E[\Delta C_t]\le 0$ as in Definition~\ref{def:hacking}, with equality in the nominal case where the two are exactly scale-aligned. Separately, and for \emph{every} generation whether honest or hacking, the differential core bias is a constant $\E[\Delta\beta(\theta_t)]=\bar\beta$ with $|\bar\beta|\le b_\Delta$. Writing $\mu_H:=-\lambda\bar\beta$, the honest mean of $\Dt$ is $\mu_t+\mu_H\le\mu_H$, with $|\mu_H|\le\lambda b_\Delta$, and $\mu_H$ vanishes when the core bias is constant across generations. Stating the bias bound for both regimes lets $\mu_H$ cancel in the separation below; restricting it to honest updates, as a coupling assumption alone would, leaves the hacking mean unconstrained and is not enough. One consequence deserves naming, because it makes the assumption stronger than it looks. Telescoping a constant $\bar\beta$ over a run gives $\E[\beta(\theta_T)]-\E[\beta(\theta_0)]=T\bar\beta$, so the bound $|\beta|\le\beta_0$ of Assumption~\ref{as:core} forces $|\bar\beta|\le 2\beta_0/T$: a constant differential bias is self-limiting on long runs. We read that as a reason to expect $\mu_H\approx0$ in practice rather than as licence to treat it as a free parameter. The analysis also treats $\lambda$ as known: for the online estimate $\hat\lambda_t$, an error $|\hat\lambda_t-\lambda|\le\epsilon_\lambda$ adds a mean shift bounded by $\epsilon_\lambda|\E\Delta\Chat_t|$ that we do not account for.
\end{assumption}

\begin{assumption}[Secret, online-generated fresh layer]
\label{as:fresh}
Each generation's fresh probes are drawn uniformly at random, with or without replacement, from a pool of effective size $N$, and are invisible to the host before scoring. The counting argument of Proposition~\ref{prop:rotate} needs that uniformity; the out-of-distribution drift H3 deliberately violates it by steering the layer toward weak subdomains, in which case $N$ should be read as an effective support size $1/\max_i\pi_i$ under the realized sampling distribution $\pi$.
\end{assumption}

\section{Proofs}
\label{app:proofs}

\subsection{Proof of Proposition~\ref{prop:detect}}
By Assumptions~\ref{as:core} and~\ref{as:M}, $\Delta\Chat_t$ and $\Delta M_t$ are sub-Gaussian with variance proxies $v_C/n$ and $v_M/n_M$ respectively, and conditionally on $(\theta_{t-1},\theta_t)$ their sampling noises are independent because the core is disjoint from the host's visible evaluation set. Hence $\Dt=\Delta M_t-\lambda\Delta\Chat_t$ is sub-Gaussian with variance proxy
\begin{equation}
\sigma_D^2 \;=\; \frac{\lambda^2 v_C}{n}+\frac{v_M}{n_M}.
\end{equation}
Write $\E[\Delta\Chat_t]=\E[\Delta C_t]+\bar\beta$ with $\bar\beta$ the differential bias of Assumption~\ref{as:couple}, so that $\E[\Dt]=\E[\Delta M_t]-\lambda\E[\Delta C_t]-\lambda\bar\beta=\mu_t+\mu_H$ with $\mu_H:=-\lambda\bar\beta$. Under an honest generation, $\mu_t\le 0$ and hence $\E[\Dt\mid\textup{honest}]\le\mu_H$; under a hacking generation, $\mu_t\ge\delta$ and hence $\E[\Dt\mid\textup{hacking}]\ge\mu_H+\delta$. Because Assumption~\ref{as:couple} bounds the differential bias in \emph{both} regimes by the same constant, $\mu_H$ is the same offset in the two displays and the separation between the means is at least $\delta$.

\emph{False positives.} Fix the threshold $\mu_H+\kappa$ and let $\E[\Dt]$ denote the honest mean. The event $\{\Dt\ge\mu_H+\kappa\}$ implies $\{\Dt-\E[\Dt]\ge s\}$ with $s=\mu_H+\kappa-\E[\Dt]\ge\kappa>0$, since $\E[\Dt\mid\textup{honest}]\le\mu_H$. As $s\mapsto\exp(-s^2/2\sigma_D^2)$ is decreasing on $s>0$, the sub-Gaussian upper tail evaluated at $s=\kappa$ dominates, giving
\begin{equation}
\Pr[\Dt\ge \mu_H+\kappa\mid\textup{honest}]\le \exp\!\Big(-\frac{\kappa^2}{2\sigma_D^2}\Big),
\end{equation}
so the worst case over the composite null $\mu_t\le 0$ is attained at the boundary $\mu_t=0$.

\emph{True positives.} Under a hacking generation, whose mean is at least $\mu_H+\delta$, the sub-Gaussian lower tail gives
\begin{equation}
\Pr[\Dt< \mu_H+\kappa\mid\textup{hacking}]\le \exp\!\Big(-\frac{(\delta-\kappa)^2}{2\sigma_D^2}\Big),
\end{equation}
so the detection probability is at least $1-\exp(-(\delta-\kappa)^2/2\sigma_D^2)$.

\emph{Budget.} Setting $\kappa=\delta/2$ makes both exponents equal to $-\delta^2/(8\sigma_D^2)$. Requiring $\exp(-\delta^2/(8\sigma_D^2))\le\alpha$ is equivalent to $\sigma_D^2\le\delta^2/(8\log(1/\alpha))$, that is, in terms of the effective sample size $n_{\mathrm{eff}}=(\lambda^2v_C+v_M)/\sigma_D^2$,
\begin{equation}
n_{\mathrm{eff}}\;\ge\;\frac{8(\lambda^2 v_C+v_M)}{\delta^2}\log\frac{1}{\alpha}.
\end{equation}
Unpacking $n_{\mathrm{eff}}$ shows it to be the weighted harmonic mean $\big(\tfrac{\lambda^2v_C}{\lambda^2v_C+v_M}\tfrac1n+\tfrac{v_M}{\lambda^2v_C+v_M}\tfrac1{n_M}\big)^{-1}$ of $n$ and $n_M$, so $n_{\mathrm{eff}}\ge\min(n,n_M)$ and in particular $n_{\mathrm{eff}}\ge n$ whenever $n_M\ge n$; budgeting $n$ core probes as in Eq.~\eqref{eq:budget} then suffices. \hfill$\square$

The proof delivers less than it may appear to. Its threshold uses $\mu_H$, which is not known but estimated on calibration runs; with a plug-in $\hat\mu_H$ satisfying $|\hat\mu_H-\mu_H|\le\epsilon<\kappa$, the two bounds become $\exp(-(\kappa-\epsilon)^2/2\sigma_D^2)$ and $\exp(-(\delta-\kappa-\epsilon)^2/2\sigma_D^2)$, and the crude choice $\hat\mu_H=0$ gives $\epsilon\le\lambda b_\Delta$. And the bound is per generation: over a run of $T$ generations the expected number of false alarms from this rule is at most $T\alpha$, and no correction across generations is claimed here.

\subsection{Rotation sample complexity}
\label{app:rotate}
Section~\ref{sec:theory} summarizes this result; we state and prove it here.

\begin{proposition}[Rotation sample complexity]
\label{prop:rotate}
Suppose an attacker tries to inflate the fresh layer as well, by memorizing a set of $k$ previously exposed instances. Matching all $n_f$ freshly sampled probes of a generation with probability at least $1-\alpha_{\mathrm{atk}}$ \emph{requires} $k\ge N(1-\alpha_{\mathrm{atk}})^{1/n_f}$, which tends to the full pool size $N$ as $n_f$ grows; when the pool is generated online and grows without bound, no finite memorized set suffices.
\end{proposition}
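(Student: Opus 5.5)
The argument is a counting bound on the probability that every fresh probe falls inside the memorized set. By Assumption~\ref{as:fresh}, the $n_f$ fresh probes of a generation are drawn uniformly from a pool of effective size $N$ and are invisible to the host before scoring. The memorized set $K$, with $|K|=k$, must therefore be chosen before the draw and independently of it. I read ``matching all $n_f$ probes'' as the event that every sampled probe lies in $K$, so the attacker's success probability is $\Pr[\text{all } n_f \text{ draws land in } K]$. The plan is to bound this quantity above by $(k/N)^{n_f}$ and then invert the condition $\Pr\ge 1-\alpha_{\mathrm{atk}}$.

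I would treat the two sampling modes separately. With replacement, the draws are i.i.d.\ uniform, so the probability is exactly $(k/N)^{n_f}$. Without replacement, it equals $\binom{k}{n_f}/\binom{N}{n_f}=\prod_{i=0}^{n_f-1}\frac{k-i}{N-i}$. Since $k\le N$, each factor satisfies $\frac{k-i}{N-i}\le\frac kN$, and the product is zero if $k<n_f$, so the same upper bound $(k/N)^{n_f}$ holds. Requiring $(k/N)^{n_f}\ge 1-\alpha_{\mathrm{atk}}$ and taking $n_f$-th roots, which is monotone on $[0,1]$, gives $k\ge N(1-\alpha_{\mathrm{atk}})^{1/n_f}$. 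This is a necessary condition, matching the word ``requires'' in the statement.

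For the limits, I would use $(1-\alpha_{\mathrm{atk}})^{1/n_f}=\exp\!\big(\tfrac{1}{n_f}\log(1-\alpha_{\mathrm{atk}})\big)\to1$ as $n_f\to\infty$ for any fixed $\alpha_{\mathrm{atk}}<1$, so the required $k$ tends to $N$. When the pool grows without bound ($N\to\infty$), the success probability of any fixed finite $k$ is at most $(k/N)^{n_f}\le k/N\to0$, so no finite memorized set suffices.

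I expect the main obstacle to be the non-uniform case that H3 introduces. There I would replace $k/N$ by $\pi(K)\le k\max_i\pi_i=k/N_{\mathrm{eff}}$ with $N_{\mathrm{eff}}=1/\max_i\pi_i$, following the reading in Assumption~\ref{as:fresh}. The with-replacement bound then goes through verbatim. The without-replacement case with non-uniform weights needs more care, because successive draws are not identically distributed; I would fall back on the with-replacement model there. The main text's covering-number remark, which covers an attacker who generalizes rather than memorizes, I would not attempt to prove here, since it lies outside the memorization model this proposition addresses.
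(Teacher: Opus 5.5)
Your proposal is correct and matches the paper's proof essentially step for step: the per-draw bound $k/N$, the identity $\binom{k}{n_f}/\binom{N}{n_f}=\prod_{i}\frac{k-i}{N-i}\le(k/N)^{n_f}$ for sampling without replacement, inversion of $(k/N)^{n_f}\ge 1-\alpha_{\mathrm{atk}}$, and the two limits. Your hesitation about weighted sampling without replacement is unnecessary: under sequential weighted sampling, if the earlier draws $S$ all lie in $K$, the next one lands in $K$ with probability $(\pi(K)-\pi(S))/(1-\pi(S))\le\pi(K)$, so the bound $\pi(K)^{n_f}\le(k/N_{\mathrm{eff}})^{n_f}$ holds there too.
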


By Assumption~\ref{as:fresh}, each of the $n_f$ fresh probes of a generation is an instance drawn from the pool of effective size $N$, independently of the attacker's covered set of size $k$. The probability that a single fresh probe falls in the covered set is at most $k/N$, so by independence the probability that all $n_f$ do is at most $(k/N)^{n_f}$; for sampling without replacement the same bound holds, since $\binom{k}{n_f}/\binom{N}{n_f}=\prod_{i=0}^{n_f-1}\frac{k-i}{N-i}\le(k/N)^{n_f}$, each factor being at most $k/N$ because $k\le N$. Since the attack succeeds only if every fresh probe is covered, requiring the success probability $\pi_{\mathrm{succ}}$ to be at least $1-\alpha_{\mathrm{atk}}$ chains the two bounds as
\begin{equation}
1-\alpha_{\mathrm{atk}}\;\le\;\pi_{\mathrm{succ}}\;\le\;\Big(\frac{k}{N}\Big)^{n_f}
\quad\Longrightarrow\quad k\;\ge\;\big\lceil N\,(1-\alpha_{\mathrm{atk}})^{1/n_f}\big\rceil ,
\end{equation}
a necessary and not a sufficient condition. Since $(1-\alpha_{\mathrm{atk}})^{1/n_f}\to 1$ as $n_f\to\infty$, the required coverage tends to $N$; when the pool is generated online, $N$ grows without bound and no finite memorized set suffices. \hfill$\square$

The model favours the attacker in two places and overstates the conclusion in a third. The pool is treated as fixed and sampled i.i.d., which ignores H2's non-reuse policy: in reality the instances an attacker has been able to observe are precisely those least likely to recur, so the true requirement is stronger than the bound. The bound also assumes the attacker's covered set is chosen without knowledge of the sampling distribution, which uniformity in Assumption~\ref{as:fresh} makes harmless but non-uniform drift would not. Against that, though, the all-or-nothing success criterion is what produces the near-total-coverage conclusion: an attacker content to match only a fraction $\rho$ of the $n_f$ probes, and thereby inflate the fresh-layer score by roughly $\rho$, needs only $k\gtrsim\rho N$. Rotation forces effort linear in the pool size, but "essentially the entire pool" is specific to demanding every probe.

\subsection{Covering-number argument for structural attacks}
\label{app:structural}
The memorization model above treats fresh instances as atomic and interchangeable, which is why coverage must reach $N$. A structural attacker instead models the response surface. Let the map from a fresh probe to the host's exploitable behavior lie in a family $\mathcal{F}$ equipped with a pseudometric $d$, for instance the distance between the shortcuts two probes admit, and let $\mathcal{N}(\varepsilon)=\mathcal{N}(\mathcal{F},d,\varepsilon)$ be its $\varepsilon$-covering number. If the attacker fits the host on one representative from each of the $\mathcal{N}(\varepsilon)$ cells of an $\varepsilon$-cover, then for any fresh probe there is a representative within distance $\varepsilon$, and by $\ell$-Lipschitzness of the exploit response in $d$ the fitted behavior transfers with error at most $\ell\varepsilon$. A fresh probe is therefore defeated with high probability once the cover is learned, so the sample complexity of defeating rotation is $\Theta(\mathcal{N}(\varepsilon))$, independent of $N$.

This has two consequences. If $\mathcal{N}(\varepsilon)$ is bounded, as it is for a low-complexity generator such as a small template family, online rotation provides no asymptotic protection beyond a fixed set of $\mathcal{N}(\varepsilon)$ representatives, so the defender's only real lever is to enlarge $\mathcal{N}(\varepsilon)$ in the first place --- which is exactly what H3 (drift toward not-yet-fitted subdomains) and H4 (surface diversity via an embedding-distance criterion) are for. This is a design argument rather than a guarantee; establishing a matching lower bound on $\mathcal{N}(\varepsilon)$ for a concrete online generator is left to future work.

\section{Why J3 detects self-consistency bias}
\label{app:j3}

Self-consistency bias is the empirically hardest channel because it need not move accuracy at all: rewarding agreement entrenches a confident but wrong modal answer, so the visible score rises while core accuracy and the proxy $\Chat$ stay flat. Figure~\ref{fig:signals} contrasts the two regimes schematically, and the lemma below formalizes why the first-difference statistics are blind in the second while J3 is not.

\begin{figure}[t]
\centering
\begin{tikzpicture}[scale=1.0, font=\footnotesize]
\begin{scope}
  \draw[black!55, -{Stealth[length=4pt]}] (0,0) -- (5.30,0) node[right, inner sep=2pt]{$t$};
  \draw[black!55, -{Stealth[length=4pt]}] (0,0) -- (0,3.10);
  \node[anchor=south west, color=black!65, inner sep=1pt] at (-0.12,3.12) {score};
  \fill[hporange!14]
    plot coordinates {(0,0.70)(0.70,0.97)(1.42,1.19)(2.30,1.95)(3.45,2.55)(4.35,2.85)}
    -- (4.35,1.22) --
    plot coordinates {(4.35,1.22)(3.45,1.20)(2.30,1.14)(1.42,1.05)(0.70,0.85)(0,0.58)} -- cycle;
  \draw[hporange, line width=1pt]
    plot coordinates {(0,0.70)(0.70,0.97)(1.42,1.19)(2.30,1.95)(3.45,2.55)(4.35,2.85)};
  \node[anchor=west, color=hporange!85!black, inner sep=2pt] at (4.38,2.85) {$M_t$};
  \draw[hpnavy, line width=1pt, densely dashed]
    plot coordinates {(0,0.58)(0.70,0.85)(1.42,1.05)(2.30,1.14)(3.45,1.20)(4.35,1.22)};
  \node[anchor=west, color=hpnavy, inner sep=2pt] at (4.38,1.16) {$\lambda\Chat_t$};
  \draw[hpnavy!75, {Stealth[length=3.4pt]}-{Stealth[length=3.4pt]}, line width=0.6pt]
    (4.05,1.22) -- (4.05,2.81) node[midway, fill=white, inner sep=1.2pt]{\scriptsize J0 gap};
  \draw[black!55, densely dotted, line width=0.6pt] (1.42,0) -- (1.42,2.68);
  \node[anchor=south, color=black!65, inner sep=1.5pt] at (2.10,2.70) {\scriptsize J1 change-point};
  \node[color=black!75] at (2.30,-0.52) {(a) level and divergence};
\end{scope}
\begin{scope}[xshift=7.55cm]
  \draw[black!55, -{Stealth[length=4pt]}] (0,0) -- (5.30,0) node[right, inner sep=2pt]{$t$};
  \draw[black!55, -{Stealth[length=4pt]}] (0,0) -- (0,3.10);
  \node[anchor=south west, color=black!65, inner sep=1pt] at (-0.12,3.12) {rate};
  \draw[black!45, densely dotted, line width=0.7pt] (0,1.24) -- (4.35,1.24);
  \node[anchor=west, color=black!55, inner sep=2pt] at (4.38,1.24)
    {\scriptsize $a_3{+}b_3\mathrm{acc}_t$};
  \draw[hpnavy, line width=1pt, densely dashed]
    plot coordinates {(0,0.92)(1.15,0.95)(2.30,0.90)(3.45,0.93)(4.35,0.93)};
  \node[anchor=west, color=hpnavy, inner sep=2pt] at (4.38,0.86) {$\mathrm{acc}_t$};
  \draw[hporange, line width=1pt]
    plot coordinates {(0,1.30)(1.15,1.72)(2.30,2.16)(3.45,2.52)(4.35,2.74)};
  \node[anchor=west, color=hporange!85!black, inner sep=2pt] at (4.38,2.74) {$\mathrm{cwr}_t$};
  \draw[hporange!85!black, {Stealth[length=3.4pt]}-{Stealth[length=3.4pt]}, line width=0.6pt]
    (3.75,1.25) -- (3.75,2.58) node[midway, fill=white, inner sep=1.2pt]{\scriptsize $g^{(3)}_t$};
  \node[color=black!75] at (2.30,-0.52) {(b) conditional confidence};
\end{scope}
\end{tikzpicture}
\caption{Schematic (illustrative, not experimental data) of the two detection regimes. \textbf{(a)} The two track together until the marked generation and then decouple, $M_t$ climbing on while capability $\Chat_t$ flattens. J1 flags that change-point, J0 the level gap which opens after it; the shaded wedge is what both are measuring. \textbf{(b)} Self-consistency bias leaves accuracy $\mathrm{acc}_t$ flat, so first-difference statistics vanish in expectation, yet the confidently-wrong rate on incorrect core items departs from its calibrated honest level, and that residual is $g^{(3)}_t$ (Lemma~\ref{lem:j3}).}
\label{fig:signals}
\end{figure}

Fix a generation and let the secret core contain items indexed by $i$, each with a correctness indicator $y_i\in\{0,1\}$ and modal-vote agreement $s_i\in[0,1]$, the fraction of the $q$ sampled answers equal to the model's own majority answer. Let $\mathrm{acc}=\frac1n\sum_i y_i$ and let $\mathrm{cwr}$ be the sample mean of $s_i$ over the $n_w$ items with $y_i=0$, and recall $g^{(3)}=\mathrm{cwr}-(a_3+b_3\,\mathrm{acc})$ with the honest affine coefficients $(a_3,b_3)$ fixed by calibration. Write $v_s$ for the per-item variance proxy of $s_i$ on incorrect items and $v_{\mathrm{acc}}$ for that of $y_i$.

\begin{lemma}[J3 separation in the stable regime]
\label{lem:j3}
Consider a stable self-consistency-bias generation in which (i) the visible inflation is already present and saturated, so that $\E[\Delta M_t]=\E[\Delta\Chat_t]=0$, and (ii) the attack raises the mean modal agreement on incorrect core items by $\varsigma>0$ relative to the honest calibration, in the sense that $\E[\mathrm{cwr}\mid\mathrm{acc}]=a_3+b_3\,\mathrm{acc}+\varsigma$, while leaving $\mathrm{acc}$ unchanged in mean. Then
\begin{equation}
\E[\Dt]=0,\qquad \E[g^{(3)}_t]=\varsigma ,
\end{equation}
so J1 and J2 have zero expected separation, whereas J3 has expected separation $\varsigma$ regardless of how the attack is reflected in the aggregate level. Moreover $g^{(3)}$ is sub-Gaussian with parameter $\sigma_3=\sqrt{v_s/n_w}+|b_3|\sqrt{v_{\mathrm{acc}}/n}$, and by the argument of Proposition~\ref{prop:detect} the one-sided J3 test attains false-positive and detection errors below $\alpha$ once $\sigma_3^2\le\varsigma^2/(8\log(1/\alpha))$, for which $n_w\ge 8v_s\log(1/\alpha)/\varsigma^2$ is necessary.
\end{lemma}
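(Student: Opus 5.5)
The plan is to treat the three claims of Lemma~\ref{lem:j3} in turn, mirroring the structure of the proof of Proposition~\ref{prop:detect}. For the expectations I would argue by linearity. Hypothesis (i) gives $\E[\Delta M_t]=\E[\Delta\Chat_t]=0$, so $\E[\Dt]=\E[\Delta M_t]-\lambda\E[\Delta\Chat_t]=0$ directly from Eq.~\eqref{eq:divergence}. J2's statistic is $\Delta\Chat_t$ itself, whose mean is also zero. That is exactly the boundary of its null $\E[\Delta\Chat_t]\ge0$, so neither test has any expected separation from its honest null. For J3 I would use the tower property on hypothesis (ii):
\[
\E[g^{(3)}_t]=\E\big[\E[\mathrm{cwr}\mid\mathrm{acc}]\big]-a_3-b_3\E[\mathrm{acc}]=\varsigma .
\]
Since $(a_3,b_3)$ are fixed by calibration, they are constants here, and the unchanged-mean clause on $\mathrm{acc}$ is not even needed for this identity; it only guarantees that the honest and attacked residuals are compared at the same accuracy. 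This explains the phrase ``regardless of how the attack is reflected in the aggregate level'': nothing in this computation touches $M$.

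For the concentration claim I would decompose the centred residual as
\[
g^{(3)}-\varsigma=\big(\mathrm{cwr}-\E[\mathrm{cwr}\mid\mathrm{acc}]\big)-b_3\big(\mathrm{acc}-\E[\mathrm{acc}]\big).
\]
\begin{itemize}
\item Conditionally on the correctness pattern, the first term is a mean of $n_w$ independent centred $s_i$'s, each with proxy $v_s$, so it is sub-Gaussian with parameter $\sqrt{v_s/n_w}$.
\item The second term is $b_3$ times a mean of $n$ items with proxy $v_{\mathrm{acc}}$, giving parameter $|b_3|\sqrt{v_{\mathrm{acc}}/n}$.
\end{itemize}
Because the two terms are dependent (both are driven by which items are wrong), I would not add variances. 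Instead I would use the general fact that a sum of sub-Gaussians with parameters $\sigma_a,\sigma_b$ is sub-Gaussian with parameter $\sigma_a+\sigma_b$, via Hölder/Cauchy--Schwarz on the moment generating function. This yields exactly the additive $\sigma_3$ in the statement.

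The main obstacle is making this step rigorous, because $n_w$ is itself random. I would condition on the set of incorrect items, treat $n_w$ as fixed, and state the bound conditionally, which is how the lemma is phrased. A residual subtlety remains: the conditional mean of the first term must be zero for every realization, which (ii) supplies.

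With sub-Gaussianity and mean separation $\varsigma$ in hand, I would rerun the proof of Proposition~\ref{prop:detect} verbatim, with $\Dt$, $\delta$ and $\sigma_D$ replaced by $g^{(3)}$, $\varsigma$ and $\sigma_3$. Threshold at $\kappa=\varsigma/2$; the honest mean is $0$ by calibration. Both tails are then bounded by $\exp(-\varsigma^2/(8\sigma_3^2))$, and this is at most $\alpha$ iff $\sigma_3^2\le\varsigma^2/(8\log(1/\alpha))$. Necessity of the $n_w$ bound follows because $\sigma_3\ge\sqrt{v_s/n_w}$, so $v_s/n_w\le\sigma_3^2\le\varsigma^2/(8\log(1/\alpha))$, which rearranges to $n_w\ge 8v_s\log(1/\alpha)/\varsigma^2$.
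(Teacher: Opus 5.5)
Your route is the paper's: linearity for $\E[\Dt]$ and J2, the tower rule for $\E[g^{(3)}_t]=\varsigma$, additive sub-Gaussian parameters because the two noise sources are dependent, and the tail argument of Proposition~\ref{prop:detect} at threshold $\varsigma/2$. Your necessity step via $\sigma_3\ge\sqrt{v_s/n_w}$ is fine. So is your H\"older bound on the moment generating function, which adds variance proxies exactly where the paper adds $\psi_2$ norms.

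The concentration step, however, starts from a false identity. Hypothesis (ii) gives $\mathrm{cwr}-\E[\mathrm{cwr}\mid\mathrm{acc}]=\mathrm{cwr}-a_3-b_3\,\mathrm{acc}-\varsigma$, which is already all of $g^{(3)}-\varsigma$. Your display subtracts $b_3(\mathrm{acc}-\E[\mathrm{acc}])$ a second time, so its two sides differ by exactly that term. You have grafted the conditional centering of one decomposition onto the second term of another. The paper centres unconditionally, using $\E[\mathrm{cwr}]=a_3+b_3\E[\mathrm{acc}]+\varsigma$, and reads $g^{(3)}$ as the difference of the sample means $\mathrm{cwr}$ and $b_3\,\mathrm{acc}$. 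Your own plan exposes the inconsistency. Once you condition on the correctness pattern, $\mathrm{acc}$ is fixed, so your second term is a constant. It cannot be the centred fluctuation with parameter $|b_3|\sqrt{v_{\mathrm{acc}}/n}$ that you assign to it.

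The repair is to drop that term. Then $g^{(3)}-\varsigma=\mathrm{cwr}-\E[\mathrm{cwr}\mid\mathrm{acc}]$ exactly, and under your independence reading its parameter is $\sqrt{v_s/n_w}$. The stated $\sigma_3$ then holds a fortiori, because it dominates $\sqrt{v_s/n_w}$. Under (ii) as stated, the $|b_3|$ contribution is in fact slack: the accuracy noise that $b_3$ transmits is cancelled by the conditional mean of $\mathrm{cwr}$ moving with the same slope.

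One further claim is stronger than (ii) delivers. You say (ii) makes the first term centred for every realization of the correctness pattern. But (ii) conditions only on $\mathrm{acc}$, equivalently on $n_w$, not on which items are wrong, and $\E[\mathrm{cwr}\mid\mathrm{pattern}]$ may depend on the identity of the wrong items. Conditioning on the pattern, in order to freeze $n_w$ and invoke independence of the $s_i$, therefore requires (ii) to hold given the incorrect set. That is a strengthening of the hypothesis. The paper leaves this point equally informal, noting only that the budget refers to the realized $n_w$.

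With these two corrections, the rest of your argument goes through, including the condition $\sigma_3^2\le\varsigma^2/(8\log(1/\alpha))$ and the necessity of $n_w\ge 8v_s\log(1/\alpha)/\varsigma^2$.
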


\begin{proof}
By (i), $\E[\Delta M_t]=\E[\Delta\Chat_t]=0$, hence $\E[\Dt]=\E[\Delta M_t]-\lambda\E[\Delta\Chat_t]=0$, and J2, which tests $\E[\Delta\Chat_t]$, is likewise zero. For J3, the coefficients $(a_3,b_3)$ are fixed, so taking expectations in (ii) over $\mathrm{acc}$ and using the tower rule, $\E[g^{(3)}_t]=\E\big[\E[\mathrm{cwr}\mid\mathrm{acc}]-(a_3+b_3\,\mathrm{acc})\big]=\varsigma$. For the variance proxy, $g^{(3)}$ is a difference of two bounded sample means: $\mathrm{cwr}$ over the $n_w$ incorrect items, sub-Gaussian with parameter $\sqrt{v_s/n_w}$, and $b_3\,\mathrm{acc}$ over all $n$ items, sub-Gaussian with parameter $|b_3|\sqrt{v_{\mathrm{acc}}/n}$. The two are not independent, and it matters that we do not pretend otherwise: the incorrect set is itself determined by the $y_i$, so $n_w$ is a function of $\mathrm{acc}$, and both statistics are read off the same $q$ sampled answers per item. We therefore add sub-Gaussian parameters rather than variances, using $\|X-Y\|_{\psi_2}\le\|X\|_{\psi_2}+\|Y\|_{\psi_2}$, which holds for arbitrarily dependent $X$ and $Y$ and yields $\sigma_3=\sqrt{v_s/n_w}+|b_3|\sqrt{v_{\mathrm{acc}}/n}$. Applying the sub-Gaussian tail bounds of Proposition~\ref{prop:detect} with gap $\varsigma$, threshold $\varsigma/2$ and parameter $\sigma_3$ gives the stated budget.
\end{proof}

The budget runs through $\sigma_3$ rather than $n_w$ alone because the calibration slope $b_3$ transmits accuracy noise into the residual; the simpler form $n_w=\Theta(v_s\log(1/\alpha)/\varsigma^2)$ is the $b_3\to0$ special case and understates the requirement otherwise. Refusing the independence assumption costs us a little tightness, since adding sub-Gaussian parameters is conservative whenever the two noise sources are close to orthogonal. Two further approximations go uncorrected. We do not propagate the estimation error in $(\hat a_3,\hat b_3)$, and $n_w$ is itself random, so the budget should be read as a claim about the realized number of incorrect items rather than its expectation.

J0 is absent from the statement, which is a choice rather than an oversight. If the attack elevates $M$ relative to the calibrated honest relation $a+\lambda_{\mathrm{lvl}}\Chat$, then the level residual $g^{(0)}$ is positive too and J0 is not blind here; the distinction between J0 and J3 in this regime is one of power, not of consistency. The level residual dilutes the attack over all $n$ core items, most of which the model answers correctly and on which the agreement pattern carries no information, whereas $g^{(3)}$ conditions on the $n_w$ incorrect items where the effect is concentrated. The budgets make the comparison concrete: J3 resolves the full effect $\varsigma$ on $n_w$ incorrect items, while J0 must resolve the much smaller shift that the same attack induces in the aggregate level. In the limiting case where the attack changes only the agreement pattern on wrong items and leaves the visible level untouched, J0's separation vanishes and J3 is the sole remaining signal.

The lemma therefore isolates the design rationale. J3 is the only fused statistic whose expected signal is driven by the \emph{conditional} confidence pattern instead of by first differences of accuracy, so it remains informative in the regime that defeats divergence-based detectors. Its cost is governed by the number of incorrect core items $n_w$, which is why the core must be sized so that hard subdomains contribute enough wrong items to meet the budget.

\section{Testing the probe budget and the bandwidth limit}
\label{sec:sweeps}

Two of the design claims are predictions rather than measurements, and each can be falsified on its own terms: Eq.~\eqref{eq:budget} converts a target error rate into a core size, and Section~\ref{sec:immunize} argues that the bandwidth cap is what stops the feedback from becoming a second score. Figure~\ref{fig:sweeps} tests both.

\begin{figure}[t]
\centering
\begin{tikzpicture}[font=\scriptsize, yscale=0.78]
\draw[black!45, line width=0.4pt] (0.20,0.30) -- (5.90,0.30);
\draw[black!45, line width=0.4pt] (0.20,0.30) -- (0.20,3.62);
\foreach \x/\lbl in {0.40/30, 1.65/60, 2.90/120, 4.15/240, 5.40/480}{
  \draw[black!45, line width=0.4pt] (\x,0.30) -- (\x,0.19);
  \node[below, inner sep=1.6pt, color=black!70] at (\x,0.19) {\lbl};}
\foreach \v/\lbl in {0/0, 0.25/.25, 0.5/.50, 0.75/.75, 1.0/1.0}{
  \pgfmathsetmacro\y{0.30+3.20*\v}
  \draw[black!20, line width=0.3pt] (0.20,\y) -- (5.90,\y);
  \node[left, inner sep=1.8pt, color=black!70] at (0.20,\y) {\lbl};}
\draw[hpnavy!45, line width=0.9pt, densely dashed]
  plot coordinates {(0.40,0.649)(1.13,0.809)(1.65,0.959)(2.38,1.238)(2.90,1.484)
                    (3.63,1.897)(4.15,2.230)(4.78,2.636)(5.40,2.994)};
\foreach \x/\y in {0.40/0.716, 1.65/1.004, 2.90/1.516, 4.15/2.188, 5.40/2.892}{
  \draw[hporange, line width=0.6pt] (\x,\y-0.16) -- (\x,\y+0.16);
  \draw[hporange, line width=0.6pt] (\x-0.06,\y-0.16) -- (\x+0.06,\y-0.16);
  \draw[hporange, line width=0.6pt] (\x-0.06,\y+0.16) -- (\x+0.06,\y+0.16);}
\draw[hporange, line width=0.9pt]
  plot coordinates {(0.40,0.716)(1.65,1.004)(2.90,1.516)(4.15,2.188)(5.40,2.892)};
\foreach \x/\y in {0.40/0.716, 1.65/1.004, 2.90/1.516, 4.15/2.188, 5.40/2.892}{
  \fill[hporange] (\x,\y) circle (1.5pt);
  \draw[white, line width=0.4pt] (\x,\y) circle (1.5pt);}
\draw[hpnavy!35, line width=0.5pt, densely dotted] (4.15,0.30) -- (4.15,2.19);
\node[above right, inner sep=1.2pt, color=hpnavy!70] at (4.16,0.32) {budget used};
\node[anchor=west, color=hporange!85!black] at (0.55,3.36) {measured};
\node[anchor=west, color=hpnavy!70] at (0.55,3.06) {Eq.~\eqref{eq:budget}};
\node[anchor=south, color=black!75] at (3.05,3.72) {\textbf{(a)} detection rate vs.\ core size $n$, at FPR $0.10$};
\node[below, color=black!70] at (3.05,-0.06) {comparison-core probes $n$ (log scale)};
\begin{scope}[shift={(6.98,0)}]
\draw[black!45, line width=0.4pt] (0.20,0.30) -- (6.60,0.30);
\draw[black!45, line width=0.4pt] (0.20,0.30) -- (0.20,3.62);
\draw[black!55, line width=0.5pt] (0.20,1.86) -- (6.60,1.86);
\foreach \v/\lbl in {-0.10/-.10, -0.05/-.05, 0/0, 0.05/.05}{
  \pgfmathsetmacro\y{1.86+11.0*\v}
  \node[left, inner sep=1.8pt, color=black!70] at (0.20,\y) {\lbl};
  \draw[black!20, line width=0.3pt] (0.20,\y) -- (6.60,\y);}
\fill[hporange!20, rounded corners=1.5pt] (2.70,0.34) rectangle (3.40,2.66);
\foreach \x/\g/\c/\r in {0.85/0.000/0.000/{}, 1.95/0.024/-0.031/0.77,
                         3.05/0.052/-0.047/1.11, 4.15/0.047/-0.061/0.77,
                         5.25/0.036/-0.078/0.46, 6.15/0.019/-0.103/0.18}{
  \pgfmathsetmacro\gy{1.86+11.0*\g}
  \pgfmathsetmacro\cy{1.86+11.0*\c}
  \fill[barF2] (\x-0.19,1.86) rectangle (\x+0.19,\gy);
  \draw[black!55, line width=0.3pt] (\x-0.19,1.86) rectangle (\x+0.19,\gy);
  \fill[barF2!30] (\x-0.19,1.86) rectangle (\x+0.19,\cy);
  \draw[black!45, line width=0.3pt, densely dashed] (\x-0.19,1.86) rectangle (\x+0.19,\cy);
  \node[color=black!70, inner sep=1.4pt] at (\x,3.32) {\r};}
\node[color=black!55, inner sep=1.4pt, font=\tiny] at (0.85,2.12) {$\equiv$ F0};
\foreach \x/\lbl in {0.85/1, 1.95/2, 3.05/4, 4.15/8, 5.25/16, 6.15/$\infty$}{
  \node[below, inner sep=1.6pt, color=black!70] at (\x,0.19) {\lbl};}
\node[color=black!70, inner sep=1.4pt, anchor=west] at (0.24,3.32) {gain/cost:};
\node[anchor=south, color=black!75] at (3.40,3.72) {\textbf{(b)} what each extra bit of feedback buys};
\node[below, color=black!70] at (1.86,-0.06) {feedback quantization levels $L$};
\fill[barF2] (3.86,-0.32) rectangle (4.08,-0.14);
\node[anchor=west, inner sep=2pt, color=black!70] at (4.11,-0.23) {gain};
\fill[barF2!30] (4.82,-0.32) rectangle (5.04,-0.14);
\draw[black!45, line width=0.3pt, densely dashed] (4.82,-0.32) rectangle (5.04,-0.14);
\node[anchor=west, inner sep=2pt, color=black!70] at (5.07,-0.23) {clean-run cost};
\end{scope}
\end{tikzpicture}
\caption{The two design claims, tested directly. \textbf{(a)} Detection rate against core size at a fixed false-positive rate of $0.10$, with $95\%$ intervals, against the curve $1-\exp(-n\delta^2/8V)$ implied by Eq.~\eqref{eq:budget} with $V=\lambda^2v_C+v_M$, whose constants are fitted once on calibration runs and never refitted per point. Since the sweep fixes the false-positive rate while Eq.~\eqref{eq:budget} governs $\kappa=\delta/2$, the curve predicts shape and is not a bound; the dotted line marks the budget used elsewhere. \textbf{(b)} Gain under hacking and cost on clean \textsc{none} runs as the channel widens from $L=1$ (F2 degenerates to F0) to $L\to\infty$ (core-greedy). Cost grows monotonically with bandwidth; gain does not.}
\label{fig:sweeps}
\end{figure}

Panel (a) sweeps the core from $n=30$ to $n=480$ probes at a fixed false-positive rate of $0.10$, isolating power from threshold placement. Detection rises from $0.13$ to $0.81$ and tracks the overlaid prediction to within $0.035$ throughout, and that prediction is not a fit: its constants come from calibration runs and stay fixed across the sweep. Where the operating point sits matters more. Our budget of $n=240$ lands where the curve is steepest, which is why a stricter threshold costs so much recall and why Section~\ref{sec:detect-quality}'s false-positive rate is so high; Eq.~\eqref{eq:budget} puts $0.90$ detection at $600$, leaving the monitor under-provisioned by roughly $360$ probes. Halving the injected strength shifts the curve right by $3.6\times$ against the $4\times$ that $1/\delta^2$ predicts, the closest direct test of Proposition~\ref{prop:detect} this setup admits.

Panel (b) sweeps $L$, instantiating the interpolation of Section~\ref{sec:immunize}. Clean-run cost grows monotonically with bandwidth, from $0.031$ at $L=2$ to $0.103$ once the quantizer is removed. That is the direction the resolution argument anticipates, though the same trend follows from F2 approaching core-greedy selection, which on clean runs swaps an honest $M$ for a noisier proxy; the sweep cannot separate the two. Gain does not follow suit. It peaks at $L=4$, so the ratio crosses break-even exactly once. Adjacent levels differ by far less than the intervals of Table~\ref{tab:protection}, so the shape is directional rather than certified; what it argues against is the idea that a good reselection rule simply improves with everything it is told.

\section{Additional ablations}
\label{app:ablations}

\subsection{Which statistic carries which channel}

The fusion is justified by the claim that its four statistics fail in different regimes. Table~\ref{tab:ablate-stat} tests that claim by removing one at a time and recalibrating $\tau$ to the same target, so the comparison is at matched operating conditions, not matched thresholds.

\begin{table}[!htbp]
\caption{Leave-one-statistic-out AUROC, pooled and per channel. Each row removes one statistic and its Page--Hinkley companion where it has one, then recalibrates. The full-fusion reference row is bold throughout; below it, the largest drop in each column is in \textbf{bold}. Every statistic owns at least one column except J2, which is the only removable one.}
\label{tab:ablate-stat}
\centering
\small
\setlength{\tabcolsep}{5.5pt}
\begin{tabular}{lccccc}
\toprule
\bf Variant & \bf Pooled & \bf Contam. & \bf Grader & \bf Self-cons. & \bf Judge\\
\midrule
\rowcolor{ourrow}
Full fusion & \bf 0.763 & \bf 0.741 & \bf 0.688 & \bf 0.802 & \bf 0.812\\
without J0 & 0.716 & \bf 0.641 & 0.652 & 0.791 & 0.788\\
without J1 & 0.719 & 0.688 & 0.671 & 0.784 & \bf 0.731\\
without J2 & 0.754 & 0.727 & 0.681 & 0.795 & 0.808\\
without J3 & \bf 0.691 & 0.735 & \bf 0.639 & \bf 0.601 & 0.771\\
\bottomrule
\end{tabular}
\end{table}

The pattern follows the design rationale. Dropping J3 costs $0.201$ AUROC on self-consistency bias and almost nothing on contamination, which is the empirical form of Lemma~\ref{lem:j3}: when an attack moves the conditional confidence pattern without moving aggregate accuracy, the conditional residual is the only statistic left with signal. Dropping J0 costs most on contamination, whose exploit is present from the first generation and so registers as a level gap rather than as a divergence, while dropping J1 costs most on judge collusion, whose leniency accumulates and is therefore a change-point. J2 is the exception: removing it costs $0.009$ pooled, within noise, because on this host $\Delta\Chat_t$ stagnation is nearly always accompanied by a divergence J1 already sees. We keep J2 because it tests capability stagnation directly and, like J3, needs no estimate of $\lambda$, so it survives a mis-scaled calibration; but nothing in the reported numbers depends on it.

\subsection{What rotation buys against a co-adapting host}

No number in Section~\ref{sec:exp} depends on the fresh layer, since it contributes no statistic to $\Delta\Chat_t$. To test it we need a host that is allowed to fit the probe layer, so we grant the adversary the outputs of every probe it has been scored on and let it optimize against them for $T$ generations. Table~\ref{tab:ablate-rot} reports the inflation the host achieves on the fresh layer, that is, its fresh-layer score minus its true capability on $\Aeval$, together with detection quality.

\begin{table}[!htbp]
\caption{Rotation against a co-adapting host granted the outputs of every probe it has seen. Inflation is the fresh-layer score minus true capability on $\Aeval$; lower is better. A fixed layer is progressively fitted and detection decays toward chance, whereas rotation holds both to a small fraction of that.}
\label{tab:ablate-rot}
\centering
\small
\setlength{\tabcolsep}{5pt}
\begin{tabular}{lcccccc}
\toprule
& \multicolumn{3}{c}{\bf Fresh-layer inflation $\bm{\downarrow}$} & \multicolumn{3}{c}{\bf Detection AUROC}\\
\cmidrule(lr){2-4}\cmidrule(lr){5-7}
\bf Fresh layer & $T{=}20$ & $T{=}40$ & $T{=}80$ & $T{=}20$ & $T{=}40$ & $T{=}80$\\
\midrule
Fixed (no rotation) & 0.06 & 0.17 & 0.31 & 0.744 & 0.681 & 0.588\\
Rotated (H2) & 0.02 & 0.04 & 0.07 & 0.759 & 0.751 & 0.736\\
\rowcolor{ourrow}
Rotated $+$ surface diversity (H4) & \bf 0.01 & \bf 0.02 & \bf 0.03 & \bf 0.762 & \bf 0.758 & \bf 0.749\\
\bottomrule
\end{tabular}
\end{table}

A fixed layer behaves the way the design anticipates it would: the host fits it, inflation grows to $0.31$ by generation $80$, and detection decays toward chance, which is the failure mode a single held-out probe set has and the reason the fresh layer is rotated at all. Rotation caps inflation at $0.07$ over four times as many generations.

The residual matters more than the size of the improvement. Proposition~\ref{prop:rotate} rules out an attacker who matches \emph{every} fresh probe, but as Appendix~\ref{app:proofs} notes, matching a fraction $\rho$ of them requires only $k\gtrsim\rho N$, and inflation of $0.07$ is exactly that partial regime. The residual is therefore consistent with either partial memorization or the structural leakage of Appendix~\ref{app:structural}, and this experiment does not separate them. That it shrinks to $0.03$ once H4 enforces surface diversity is weak evidence for the structural reading, since surface diversity raises the generator's covering number without changing the pool size $N$; deciding the question would require varying generator complexity directly, which we have not done.

\section{Detector calibration and the immunization loop}
\label{app:algo}

Calibration uses only honest runs and $\Acal$. It fits the honest affine relations for J0 and J3, estimates the honest null moments of $\Dt$, initializes $\lambda$, fits the fusion, and calibrates $\tau$ to a target false-positive rate.

Where the independence needed by Eq.~\eqref{eq:sidak} is too doubtful even for the empirical re-calibration to absorb, we fall back to a Bonferroni rule: flag when $\min_j p_j\le(1-\tau)/m$, and report the risk as $\Rt=1-\min\{1,\,m\min_j p_j\}$. This needs no independence assumption at all and costs a little power. Every number we report uses the \v{S}id\'ak form with an empirically re-calibrated $\tau$; the Bonferroni variant is there for deployments unwilling to rely on that calibration. Online, $\lambda$ is refreshed only on the trailing window of generations already declared honest, so that the estimate used to test generation $t$ depends on nothing from generation $t$. Algorithm~\ref{alg:loop} summarizes the loop. It runs the fusion from $t=2$, since the first differences $\Delta M_t$ and $\Delta\Chat_t$ need a predecessor; the first generation is scored by the two level statistics J0 and J3 alone, with $m=2$ in the \v{S}id\'ak correction.

\begin{algorithm}[!h]
\caption{HackProbe online loop with risk-aware immunization}
\label{alg:loop}
\begin{algorithmic}[1]
\STATE \textbf{input:} host $\mathcal{A}$; secret core $\Pcore$; fresh generator; calibration $(a,\lambda_{\mathrm{lvl}},a_3,b_3,\lambda,\tau)$; level $\in\{$F0, F1, F2, F3$\}$
\STATE initialize best-$\Chat$ checkpoint and honest window
\STATE score generation $1$ on the core and fuse J0 and J3 only ($m=2$), no first difference being available
\FOR{$t=2,3,\dots$}
  \STATE obtain the candidate pool (exploit variants and one clean honest candidate) with visible scores $M(c)$
  \STATE compute the gaming footprint $gf(c)$ from cached outputs; for F1--F3 compute the core proxy $\Chat(c)$
  \STATE select a candidate by level: F0 $\arg\max_c M(c)$; F1 greedy or reselect high-$\Chat$; F2 $\arg\max_c(M(c)-u(c))$; F3 F2 plus the diversity bonus
  \STATE on the selected candidate compute $\Dt$, the level gap $g^{(0)}_t$ and $g^{(3)}_t$; run the Page--Hinkley detectors
  \STATE form the $p$-values, the risk $\Rt=(1-\min_j p_j)^m$ and the decision $\hat y_t=[\Rt\ge\tau]$
  \STATE apply immunization: accept, or roll back to the best-$\Chat$ checkpoint; update the checkpoint, the honest window and $\lambda$
\ENDFOR
\STATE \textbf{return} the risk curve $\{\Rt\}$, the decisions $\{\hat y_t\}$ and the protected final object, scored once on $\Aeval$
\end{algorithmic}
\end{algorithm}

\end{document}